\newtheorem{definition}{Definition}[section]
\newtheorem{theorem}{Theorem}[section]
\newtheorem*{theorem*}{Theorem}
\newtheorem{proposition}{Proposition}[section]
\newtheorem{lemma}[theorem]{Lemma}
\newtheorem{assumption}{Assumption}[section]
\newcommand{\lrgd}{\texttt{LRGD}}
\newcommand{\gd}{\texttt{GD}}
\newcommand{\op}{\mathrm{op}}
\newcommand{\spn}{\mathrm{span}}
\newcommand{\Bcal}{\mathcal{B}}
\newcommand{\Fcal}{\mathcal{F}}
\newcommand{\Rbb}{\mathbb{R}}
\newcommand{\R}{\mathbb{R}}
\newcommand{\thetab}{\bar\theta}
\definecolor{dark_red}{rgb}{0.2,0,0}
\newcommand{\gr}{\nabla}
\newcommand{\grhat}{\hat{\nabla}}
\renewcommand\footnotemark{}
\definecolor{darkred}{RGB}{150,0,0}
\definecolor{darkgreen}{RGB}{0,150,0}
\definecolor{darkblue}{RGB}{0,0,150}
\title{\textbf{Gradient Descent for Low-Rank Functions}\vspace{1em}}
\author{%
  Romain~Cosson\\
  \small Department of EECS\\
  \small Massachusetts Institute of Technology\\
  \small \texttt{cosson@mit.edu}\\
  \and
  Ali~Jadbabaie\\
  \small Department of CEE\\
  \small Massachusetts Institute of Technology\\
  \small \texttt{jadbabai@mit.edu}\\
  \and
  Anuran~Makur\\
  \small Department of CS and School of ECE\\
  \small Purdue University\\
  \small \texttt{amakur@purdue.edu}\\
  \and
  Amirhossein~Reisizadeh\\
  \small Department of EECS\\
  \small Massachusetts Institute of Technology\\
  \small \texttt{amirr@mit.edu}\\
  \and
  Devavrat~Shah\thanks{The author ordering is alphabetical. This research was supported in part the MIT-IBM Watson AI Lab and in part by the Department of Defense (DOD) Office of Naval Research (ONR) under Grant N00014-20-1-2394.}\\
  \small Department of EECS\\
  \small Massachusetts Institute of Technology\\
  \small \texttt{devavrat@mit.edu}
}
\date{}
\begin{document}

\maketitle

%%%%%%%%%%%%%%%%%%%%%%%%%%%%%%%%%%%%%%%%%%%%%%%%%%%%%%%%%%%%
\begin{abstract} 
Several recent empirical studies demonstrate that important machine learning tasks, e.g., training deep neural networks, exhibit low-rank structure, where the loss function varies significantly in only a few directions of the input space. In this paper, we leverage such low-rank structure to reduce the high computational cost of canonical gradient-based methods such as gradient descent ({\gd}). Our proposed \emph{Low-Rank Gradient Descent} ({\lrgd}) algorithm finds an $\epsilon$-approximate stationary point of a $p$-dimensional function by first identifying $r \leq p$ significant directions, and then estimating 
the true $p$-dimensional gradient at every iteration by computing directional derivatives only 
along those $r$ directions. We establish that the ``directional oracle complexities'' of {\lrgd} for strongly convex and non-convex objective functions are $\ccalO(r \log(1/\epsilon) + rp)$ and $\ccalO(r/\epsilon^2 + rp)$, respectively. When $r \ll p$, these complexities are smaller than the known complexities of $\ccalO(p \log(1/\epsilon))$ and $\ccalO(p/\epsilon^2)$ of {\gd} in the strongly convex and non-convex settings, respectively. Thus, {\lrgd} significantly reduces the computational cost of gradient-based methods for sufficiently low-rank functions. In the course of our analysis, we also formally define and characterize the classes of exact and approximately low-rank functions.

\end{abstract}
%%%%%%%%%%%%%%%%%%%%%%%%%%%%%%%%%%%%%%%%%%%%%%%%%%%%%%%%%%%%

%%%%%%%%%%%%%%%%%%%%%%%%%%%%%%%%%%%%%%%%%%%%%%%%%%%%%%%%%%%%
\section{Introduction}
%%%%%%%%%%%%%%%%%%%%%%%%%%%%%%%%%%%%%%%%%%%%%%%%%%%%%%%%%%%%

First order optimization methods such as Gradient Descent ({\gd}) and its variants have become the cornerstone of training modern machine learning models. Hence, reducing the running times of first order methods has been an important problem in the optimization literature, cf. \cite{Bubeck2015,BottouCurtisNocedal2018}. The running times of {\gd} methods is known to grow linearly with the dimension of the model parameters, which can be very large, e.g., in deep neural networks. However, it has recently been observed that many empirical risk minimization problems have objective functions (i.e., real-valued losses) with \emph{low-rank structure} in their gradients \cite{gur2018gradient,Sagunetal2018,Papyan2019,cui2020active,gooneratne2020low,Wuetal2021,JadbabaieMakurShah2022}. In what follows, we will refer to such objects as \textit{low-rank functions}. 

Roughly speaking, a low-rank function is a differentiable real-valued function whose gradients live \emph{close} to a low-dimensional subspace. Such low-rank structure has been exploited to theoretically improve the running times of federated optimization algorithms in \cite{JadbabaieMakurShah2022}. Yet, canonical {\gd} methods do not exploit this additional structure. Hence, the goal of this work is to address the following question:
%%%%
\begin{tcolorbox}
\begin{center}
\textit{Can we leverage low-rank structure to design optimization algorithms with running times that have better dependence on the dimension?}
\end{center}
\end{tcolorbox}
%%%%

We consider solving the optimization problem $\min_{\theta \in \reals^p} f(\theta)$, with objective function $f:\Rbb^p \rightarrow \Rbb$, using gradient-based methods. In particular, we characterize the computational cost of an iterative routine to solve this minimization problem, namely \emph{oracle complexity}, as the number of evaluations of \emph{directional derivatives} of $f$ to achieve a certain accuracy level $\epsilon >0$. Under this definition, the vanilla {\gd} algorithm requires an oracle complexity of $\ccalO(p \log(1/\epsilon))$ to find an $\epsilon$-minimizer of a strongly convex objective $f$, which grows linearly with parameter dimension $p$. Note that each iteration of {\gd} costs $p$ directional derivatives as it requires a full gradient computation. 

To ameliorate the oracle complexity of such methods, we propose to leverage the existing low-rank structure in the objective. As briefly pointed out above, low-rank functions demonstrate significant variation in only a few directions of the parameter space, e.g., in $r$ directions where $r \ll p$. As a result, its gradient vectors live entirely (or approximately) in a low-dimensional subspace.
To minimize such low-rank objective functions, we restrict the function to the low-rank subspace defined by the significant directions -- the \emph{active subspace} -- and perform descent iterations only along these directions. This is the main idea behind our proposed method \emph{Low-Rank Gradient Descent} ({\lrgd}). More precisely, {\lrgd} first identifies a fairly accurate proxy for the active subspace. Then, in each descent iteration, it approximates the true gradient vector on the current iterate by computing only $r$ directional derivatives of the objective along the active subspace. This approximate gradient is then used to update the model parameters. 
Note that each iteration of {\lrgd} costs only $r$ directional derivative computations. Intuitively, this is far less than canonical methods such as {\gd}, which has iteration cost growing linearly with $p$, as noted earlier.

Low-rank structures have been observed in several contexts which further highlights the potential utility of the proposed {\lrgd} method. We briefly describe some such motivating examples below.

%%%%%%%%%%%%%%%%
\textbf{Motivation 1: Low-rank Hessians.} Low-rank structures have been found in large-scale deep learning scenarios irrespective of the architecture, training methods, and tasks \cite{gur2018gradient,Sagunetal2018,Papyan2019,Wuetal2021,singh2021analytic}, where the Hessians exhibit a sharp decay in their eigenvalues. For instance, in classification tasks with $k$ classes, during the course of training a deep neural network model, the gradient lives in the subspace spanned by the $k$ eigenvectors of the Hessian matrix with largest eigenvalues \cite{gur2018gradient}. As another example, \cite{cui2020active} empirically demonstrates that some layers of a deep neural network model (VGG-19) trained on the CIFAR-10 dataset shows \emph{exponential} decay in the eigenvalues of the gradient covariance matrix. These practical scenarios further suggest that the {\lrgd} method may be able to leverage such low-rank structures to mitigate training computation cost. This connection is further detailed in Appendix \ref{ap: discussion_low_rank}. Furthermore, a simple simulation based on the the MNIST database \cite{LeCunCortesBurgesMNIST} that illustrates low-rank structure in gradients of neural networks is provided in Appendix \ref{ap: Low-rank objective functions of neural networks}.

\textbf{Motivation 2: Relation to line search.} Line search is an iterative strategy to find the optimum stepsize in {\gd}-type methods \cite{boyd2004convex}. More precisely, given a current position $\theta$, the line search algorithm minimizes the objective $f$ restricted to the rank-$1$ subspace (i.e., a line) passing through $\theta$ in the direction of $\gr f(\theta)$, that is, $\theta \gets \argmin_{\theta'\in \{\theta + \alpha \nabla f(\theta) : \alpha \in \Rbb\}} f(\theta')$. Now consider a similar search problem where the objective $f$ is to be minimized  restricted to a rank-$r$ subspace rather than a line. %(i.e. $r=1$). 
We refer to this subspace search method as \emph{iterated {\lrgd}} (see Algorithm \ref{alg: subspace search}). The proposed {\lrgd} method can be viewed as solving the intermediate minimization problems in such subspace search.

\textbf{Motivation 3: Ridge functions.} Ridge functions are generally defined as functions that only vary on a given low-dimensional subspace of the ambient space \cite{logan1975optimal}. There are many standard examples of ridge function losses in machine learning, e.g., least-squares regression, logistic regression, one hidden layer neural networks, etc. \cite{ismailov2020notes, pinkus1999approximation}. Moreover, they have been exploited in the development of projection pursuit methods in statistics, cf. \cite{donoho1989projection} and the references therein. In the next section we will show that the $\lrgd$ method is particularly well-suited for optimization of such functions. 

\textbf{Motivation 4: Constrained optimization.} The general idea of solving an optimization problem on a smaller dimensional subspace is connected to \emph{constrained optimization}. In fact, one of the primary steps of $\lrgd$ can be perceived as \emph{learning} (through sampling) ``hidden'' linear constraints under which it is efficient to solve an a priori unconstrained optimization problem. Classically, when such constraints are known, e.g., when optimizing $f(\theta)$ under a constraint of the form $A \theta = b$, a change-of-variables allows us to reduce the dimension of the optimization problem \cite{boyd2004convex}.

\textbf{Main contributions.} We next list our main contributions: 
%%%%%
\begin{table} \label{tab: OC}
\begin{center}
\begin{tabulary}{11.5cm}{RCCC}
\cmidrule[\heavyrulewidth]{2-3}&
strongly convex & non-convex
\\ \midrule
{\gd} \par {\lrgd} &
 $\ccalO(p \log(1/\epsilon))$ \cite{Nesterov2004} \par \textcolor{blue}{$\ccalO(r \log(1/\epsilon) + rp)$} [Th \ref{thm: exact LR SC}, \ref{thm: apprx LR SC}]  &
 $\ccalO(p/\epsilon^2)$ \cite{Nesterov2004} \par \textcolor{blue}{$\ccalO(r/\epsilon^2 + rp)$} [Th \ref{thm: exact LR NC}, \ref{thm: apprx LR NC}]
\\\bottomrule
\end{tabulary}
\vspace{0.1in}
\caption{Oracle complexities for both exactly and approximately low-rank settings. (The difference between these settings is in constants that are hidden by the $\ccalO$ notation.)}
\end{center}
\end{table}
%%%%%  
\begin{enumerate}[label=\arabic*),leftmargin=*,noitemsep,nolistsep]
\item We identify the class of low-rank functions and propose {\lrgd} in Algorithm \ref{alg: LRGD} to mitigate gradient computation cost of {\gd}-type methods to minimize such objectives. 
\item We provide theoretical guarantees for the proposed {\lrgd} method and characterize its oracle complexity in optimizing both exactly and approximately low-rank strongly convex and non-convex functions in Theorems \ref{thm: exact LR SC}, \ref{thm: apprx LR SC}, \ref{thm: exact LR NC}, and \ref{thm: apprx LR NC}. As demonstrated in Table \ref{tab: OC}, for low-rank objectives (with sufficient approximation accuracy), {\lrgd} is able to reduce the dependency of the dominant term in {\gd} from $p$ to $r$. In particular, compared to {\gd}, {\lrgd} slashes the oracle complexity of finding an $\epsilon$-minimizer from $\ccalO(p \log(1/\epsilon))$ to $\ccalO(r \log(1/\epsilon) + rp)$ and from $\ccalO(p/\epsilon^2)$ to $\ccalO(r/\epsilon^2 + rp)$, respectively, for strongly convex and non-convex objectives. 

\item We derive several auxiliary results characterizing exactly and approximately low-rank functions, e.g., Propositions \ref{prop: exact LR conditions}, \ref{prop: gradients_span_directions}, and \ref{prop: algebra of LR}. 

\item We propose several algorithms that can optimize general (possibly high-rank) functions using {\lrgd} as a building block, e.g., \emph{iterated {\lrgd}} (Algorithm \ref{alg: subspace search}) and \emph{adaptive {\lrgd}} (Algorithm \ref{alg: adaptive LRGD}). While $\lrgd$ is provably efficient on (approximately) low-rank functions, its variants are \emph{applicable to broader classes of functions that may not possess low-rank structure} on the full space.

\item We evaluate the performance of {\lrgd} on different objectives and demonstrate its usefulness in restricted but insightful setups. 
\end{enumerate}

\textbf{Related work.} Low-rank structures have been exploited extensively in various disciplines. For example, in machine learning, matrix estimation (or completion) methods typically rely on low-rank assumptions to recover missing entries \cite{CandesPlan2010}. In classical statistics, projection pursuit methods rely on approximating functions using ridge functions \cite{logan1975optimal,donoho1989projection}, which are precisely low-rank functions (see Proposition \ref{prop: exact LR conditions}). In a similar vein, in scientific computing, approximation methods have been developed to identify influential input directions of a function for uncertainty quantification \cite{constantine2014active}.

In contrast to these settings, we seek to exploit low-rank structure in optimization algorithms. Recently, the authors of \cite{JadbabaieMakurShah2021} developed a local polynomial interpolation based {\gd} algorithm for empirical risk minimization that learns gradients at every iteration using smoothness of loss functions in data. The work in \cite{JadbabaieMakurShah2022} extended these developments to a federated learning context (cf. \cite{McMahanetal2017}, \cite{Reisizadehetal2020} and the references therein), where low-rank matrix estimation ideas were used to exploit such smoothness of loss functions. Following this line of reasoning, this paper utilizes the structure of low-rank objective functions to improve iterative gradient-based algorithms in the canonical optimization setting. (In a very different direction, low-rank structure has also been used to solve semidefinite programs in \cite{chen2014efficient}.)
 
Several works have recently highlighted different forms of low-rank structure in large-scale training problems. For example, deep neural networks seem to have loss functions with low-rank Hessians \cite{gur2018gradient,Sagunetal2018,Papyan2019,Wuetal2021}, which partly motivates our work here (also see Appendix \ref{ap: Low-rank objective functions of neural networks}). Moreover, deep neural networks have been shown to exhibit low-rank weight matrices \cite{LeJegelka2022,galanti2022sgd} and neural collapse \cite{PapyanaHanDonoho2020,Fangetal2021,HuiBelkinNakkiran2022}, and low-rank approximations of gradient weight matrices have been used for their training \cite{gooneratne2020low}. 

Our work falls within the broader effort of speeding up first order optimization algorithms, which has been widely studied in the literature. In this literature, the running time is measured using first order oracle complexity (i.e., the number of full gradient evaluations until convergence) \cite{NemirovskiiYudin1983,Nesterov2004}. The first order oracle complexity of {\gd} for, e.g., strongly convex functions, is analyzed in the standard text \cite{Nesterov2004}. Similar analyses for stochastic versions of {\gd} that are popular in large-scale empirical risk minimization problems, such as (mini-batch) stochastic {\gd}, can be found in \cite{Nemirovskietal2009,Dekeletal2012}. There are several standard approaches to theoretically or practically improving the running times of these basic algorithms, e.g., momentum \cite{polyak1964some}, acceleration \cite{Nesterov1983}, variance reduction \cite{LeRouxSchmidtBach2012,SchmidtLeRouxBach2017,ShalevShwartzZhang2013,JohnsonZhang2013}, and adaptive learning rates \cite{duchi2011adaptive,tieleman2012lecture,KingmaBa2015}. More related to our problem, random coordinate descent-type methods such as stochastic subspace descent \cite{kozak2019stochastic} have been studied. In addition, various other results pertaining to {\gd} with inexact oracles (see \cite{FriedlanderSchmidt2012} and the references therein), fundamental lower bounds on oracle complexity \cite{Agarwaletal2009,Carmonetal2019}, etc. have also been established in the literature. We refer readers to the surveys in \cite{Bubeck2015,BottouCurtisNocedal2018} for details and related references.

The contributions in this paper are complementary to the aforementioned approaches to speed up first order optimization methods, which do not use low-rank structure in objective functions. Indeed, ideas like acceleration, variance reduction, and adaptive learning rates could potentially be used in conjunction with our proposed {\lrgd} algorithm, although we leave such developments for future work. Furthermore, we only consider {\gd}-like methods rather than more prevalent stochastic {\gd} methods in this work, because our current focus is to show simple theoretical improvements in running time by exploiting low-rank structure. Hence, we also leave the development of stochastic optimization algorithms that exploit low-rank structure for future work.

\textbf{Outline.} We briefly outline the remainder of the paper. We state our assumptions, the computational model, and definitions of low-rank functions in Section \ref{sec: setup}. We describe the {\lrgd} algorithm and its theoretical guarantees in Section \ref{sec: results}. Finally, illustrative simulations are given in Section \ref{sec: sims}. \label{sec: intro}

%%%%%%%%%%%%%%%%%%%%%%%%%%%%%%%%%%%%%%%%%%%%%%%%%%%%%%%%%%%%
\section{Formal setup and low-rank structure} \label{sec: setup}
%%%%%%%%%%%%%%%%%%%%%%%%%%%%%%%%%%%%%%%%%%%%%%%%%%%%%%%%%%%%

\textbf{Notation.} We refer readers to Appendix \ref{ap: notation} for a list of notation used in this paper. 

\subsection{Preliminaries}

We consider a real-valued differentiable function $f :\Theta \rightarrow \Rbb$, where $\Theta = \Rbb^p$ for some positive integer $p$, and denote its gradient $\nabla f : \Theta \rightarrow \Rbb^p$. We assume that $f$ is $L$-smooth, i.e, that its gradient is $L$-Lipschitz continuous. 

\begin{assumption}[$L$-smoothness] \label{assumption: smooth}
The function $f$ is $L$-smooth for a given parameter $L>0$ if for any $\theta, \theta' \in \Theta$, $f(\theta') 
\leq
f(\theta) + \langle \gr f(\theta), \theta' - \theta \rangle + \frac{L}{2} \Vert \theta' - \theta \Vert^2$.
%%%%
\end{assumption}

\textbf{Non-convex setting.} Under no further assumption on the function $f$, the general goal we will pursue is to find an $\epsilon$-stationary point, that is, for a given accuracy $\epsilon$, we aim to find $\theta \in \Theta$ such that,
%%%%
\begin{align} \label{eq: e-stationary}
    \Vert \nabla f(\theta)\Vert \leq \epsilon.
\end{align}
Note that a solution to \eqref{eq: e-stationary} exists as soon as $f$ is lower-bounded by some $f^* = \inf_{\theta\in \Theta} f(\theta) \in \Rbb$ which is assumed throughout this paper. 

\textbf{Strongly convex setting.} We also study a similar problem under the additional assumption that $f$ is strongly convex. 
\begin{assumption}[Strong convexity] \label{assumption: SC} The function $f$ is $\mu$-strongly convex for a given parameter $\mu>0$ if for any $\theta, \theta' \in \Theta$,
$f(\theta') 
\geq
f(\theta) + \langle \gr f(\theta), \theta' - \theta \rangle + \frac{\mu}{2} \Vert \theta' - \theta \Vert^2$.
%%%%
\end{assumption}
Under this particular setting, a solution of \eqref{eq: e-stationary} can be interpreted as an approximate minimizer of $f$ through the \emph{Polyak-Łojasiewicz inequality} (see, e.g., \cite{boyd2004convex}):
$f(\theta)-f^*\leq \frac{1}{2\mu}\|\nabla f(\theta)\|^2$.
The goal we will pursue will be to find an $\epsilon$-minimizer for $f$, which is defined as $\theta \in \Theta$ satisfying
%%%%
\begin{align} \label{eq: e-minimizer}
    f(\theta)- f^* \leq \epsilon,
\end{align}
%%%%
where $\epsilon > 0$ is the predefined accuracy. Note that it suffices to find an $\epsilon'$-stationary point of $f$ with $\epsilon' = \sqrt{2\mu\epsilon}$ to get an $\epsilon$-minimizer of $f$, which explains the relation between both settings. 

In this paper, we focus on gradient descent methods and zoom-in on their computational complexity which we concretely measure through the following computation model.

\textbf{Computation model.} To characterize the running time of a {\gd}-type algorithm for solving the problem \eqref{eq: e-stationary} and \eqref{eq: e-minimizer}, we employ a variant of the well-established notion of \emph{oracle complexity} \cite{NemirovskiiYudin1983,Nesterov2004}. In particular, we tailor this notion to count the number of calls to \emph{directional} derivatives of the objective where the directional derivative of $f$ along a unit-norm vector $u \in \reals^p$ is a scalar defined as, 
%%%%
\begin{align}
\partial_{u} f(\theta) 
\coloneqq
\lim_{t\rightarrow 0}\frac{f(\theta+t u)-f(\theta)}{t}
=
\langle \nabla f(\theta), u \rangle.
\end{align}
This computation model -- which differs from most of the literature on first order methods that typically assumes the existence of a full-gradient oracle $\nabla f(\theta)$ -- will allow us to showcase the utility of a low-rank optimization method. 
%%%%
\begin{definition}[Oracle complexity]
Given an algorithm {\normalfont{\texttt{ALG}}}, for a predefined accuracy $\epsilon >0 $ and function class $\Fcal$, we denote by $\ccalC_{{\normalfont{\texttt{ALG}}}}$ the maximum number of oracle calls to directional derivatives required by {\normalfont{\texttt{ALG}}} to reach an $\epsilon$-solution to \eqref{eq: e-minimizer} or \eqref{eq: e-stationary} for any function in $\Fcal$. 
\end{definition}
%%%%

%%%%
Note that computing a full-length gradient vector of dimension $p$ requires $p$ calls to the directional gradient oracle. As a consequence, for the class of smooth and strongly convex functions, the oracle complexity of vanilla Gradient Descent algorithm to find an $\epsilon$-minimizer of $f$ is $\ccalC_{{\gd}} = \ccalO(p \log(1/\epsilon))$.

In this paper, we target the (directional) oracle complexity of gradient-based algorithms such as {\gd} and propose a computation-efficient algorithm, namely \emph{Low-Rank Gradient Descent} ({\lrgd}). The main idea of {\lrgd} is to leverage the potential low-rank structure of the objective in order to slash the oracle complexity during the training. To do so, {\lrgd} first identifies a low-rank subspace $H$ that (approximately) contains the gradients of the objective $f$ (also known as \emph{active subspace}). Let $\{u_1,\cdots,u_r\}$ denote a basis for $H$. In each iteration with current parameter model $\theta$, {\lrgd} computes $r$ directional derivatives of $f$, i.e. $\partial f_{u_1}(\theta), \cdots, \partial f_{u_r}(\theta)$, and uses
%%%%
\begin{align}
\grhat f(\theta)
=
\sum_{i\in [r]}\partial_{u_i}f(\theta) u_i,
\end{align}
%%%
as an approximation to the true gradient $\gr f(\theta)$ and updates $\theta \gets \theta - \alpha \grhat f(\theta)$ with a proper stepsize $\alpha$. We provide the description of {\lrgd} and its guarantees for different function classes in Section \ref{sec: results}.

\subsection{Exactly low-rank functions}

In this paragraph, we formally define the notion of \textit{low-rank function} (Definition \ref{definition: excat LR}). We then immediately observe that such functions have already widely been studied in the literature under different equivalent forms (Proposition \ref{prop: exact LR conditions}). We make the simple observation that the class of such functions is fairly restricted, e.g., it contains no strongly convex function, thereby requiring the more general notion of approximately low-rank functions that will come later.  
%%%%%
\begin{definition}[Rank of function] \label{definition: excat LR}
The function $f$ has rank $r$ if the minimal subspace $H\subseteq \Rbb^p$ satisfying $\nabla f(\theta) \in H$ for any $\theta \in \Theta$ has dimension $r$. 
\end{definition}
%%%%%

\begin{proposition}[Characterizations of low-rank functions] \label{prop: exact LR conditions} The following are equivalent:
\begin{enumerate}[label=\arabic*),ref=(\arabic*),leftmargin=*,noitemsep,nolistsep]
\item There exists a subspace $H$ of dimension $r$ such that $\nabla f(\theta)\in H$ for all $\theta\in \Theta$.  
\label{statement1}
\item There exists a subspace $H$ of dimension $r$ such that $f(\theta) = f(\theta+\theta')$ for all $\theta\in \Theta$ and $\theta' \in H^{\perp}$.\label{statement2} \item There exists a matrix $A\in \Rbb^{r \times p}$ and a map $\sigma:\Rbb^r \rightarrow \Rbb$ such that $f(\theta) = \sigma(A\theta)$ for all $\theta\in \Theta$. \label{statement3}
\end{enumerate}
\end{proposition}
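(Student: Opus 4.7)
The plan is to establish the equivalence via the cyclic chain of implications $(1) \Rightarrow (2) \Rightarrow (3) \Rightarrow (1)$. Each direction reduces to a short calculus argument once the orthogonal decomposition $\Rbb^p = H \oplus H^{\perp}$ is in place; differentiability of $f$ (guaranteed by Assumption \ref{assumption: smooth}) is the only regularity that enters. The one step requiring a bit of care is $(2) \Rightarrow (3)$, where one must actually exhibit a scalar function $\sigma$ of $r$ variables and check that the construction is well-defined---this is precisely where invariance of $f$ under $H^{\perp}$-translations plays a critical role.

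For $(1) \Rightarrow (2)$, I would fix $\theta \in \Theta$ and $\theta' \in H^{\perp}$ and apply the fundamental theorem of calculus along the segment from $\theta$ to $\theta + \theta'$:
\[
f(\theta + \theta') - f(\theta) \;=\; \int_0^1 \langle \nabla f(\theta + t\theta'), \, \theta' \rangle \, dt.
\]
By hypothesis $\nabla f(\theta + t\theta') \in H$ for every $t \in [0,1]$, while $\theta' \in H^{\perp}$, so the integrand vanishes pointwise and $f(\theta + \theta') = f(\theta)$.

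For $(2) \Rightarrow (3)$, I would take $A \in \Rbb^{r \times p}$ to be a matrix whose rows form an orthonormal basis of $H$, so that $A^{\top}A$ is the orthogonal projector onto $H$ and $(I - A^{\top}A)\theta \in H^{\perp}$ for every $\theta \in \Theta$. Invariance of $f$ under $H^{\perp}$-translations then yields $f(\theta) = f(A^{\top}(A\theta))$, and defining $\sigma : \Rbb^r \to \Rbb$ by $\sigma(y) \coloneqq f(A^{\top} y)$ gives $f(\theta) = \sigma(A\theta)$ globally.

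For $(3) \Rightarrow (1)$, I would note that for any $v \in \ker(A)$ one has $f(\theta + v) = \sigma(A(\theta + v)) = \sigma(A\theta) = f(\theta)$, so $f$ is constant along $\ker(A)$-translations; differentiability then forces $\langle \nabla f(\theta), v \rangle = 0$ for every such $v$, i.e., $\nabla f(\theta) \in \ker(A)^{\perp} = \mathrm{range}(A^{\top})$, which is a subspace of dimension at most $r$. If strict equality of dimension is preferred, any $r$-dimensional subspace of $\Rbb^p$ containing $\mathrm{range}(A^{\top})$ serves as the required $H$.
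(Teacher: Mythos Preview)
Your proof is correct and follows the same cyclic scheme $(1)\Rightarrow(2)\Rightarrow(3)\Rightarrow(1)$ as the paper, with identical constructions in the first two implications. The only place you diverge is $(3)\Rightarrow(1)$: the paper applies the chain rule directly to $f(\theta)=\sigma(A\theta)$ to obtain $\nabla f(\theta)=A^{\top}\nabla\sigma(A\theta)\in\mathrm{span}(A^{\top})$, whereas you argue via invariance of $f$ along $\ker(A)$ and then pass to $\ker(A)^{\perp}=\mathrm{range}(A^{\top})$. Your route is arguably a touch more careful, since statement~(3) does not explicitly posit differentiability of $\sigma$, and you also address the possibility that $\mathrm{rank}(A)<r$, which the paper leaves implicit; the paper's chain-rule argument is shorter but tacitly assumes $\sigma$ is differentiable (which, to be fair, is forced when $A$ has full row rank and $f$ is differentiable).
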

We defer the proof to Appendix \ref{proof: prop exact LR}. The proposition above relates our notion of low-rank functions 1) to other equivalent forms. The property 2) is ordinarily referred to as $H^{\perp}-$invariance (see, e.g. \cite{constantine2014active}) whereas property 3) is typically used to define the notion of ridge functions \cite{logan1975optimal} or multi-ridge functions \cite{tyagi2014learning}. In what follows, we will equivalently use the terminology of (exactly) low-rank function and ridge function. 

Note that for our purposes, the class of ridge functions has some limitations. In particular, any $\mu-$strongly convex function is of (full) rank $p$ because it satisfies:
%%%%
\begin{equation}
    f(\theta^*+\theta') \geq f(\theta^*)+\frac{\mu}{2}\|\theta'\|^2
    >
    f(\theta^*), \quad \forall \theta'\in \Rbb^p, \theta' \neq 0, 
\end{equation}
%%%%
which together with 2) in Proposition \ref{prop: exact LR conditions} yields that $H$ is of full dimension $r=p$. On the other hand, we expand on some properties of low-rank functions in Appendix \ref{Algebra of Low Rank Functions}. Furthermore, we illustrate via an example in Appendix \ref{Example: Transformation to a Low Rank Function} that optimization of a high-rank function can sometimes be equivalently represented as optimization of a low-rank function using an appropriately chosen non-linear transformation.

\subsection{Approximately low-rank functions}\label{sec: approx_lr}

The discussion above calls for a relaxation of the notion of low rank functions. In this section, we provide a definition for \emph{approximately low-rank} functions where we no longer require that the gradients all belong to a given low-dimensional subspace $H$ but instead are inside a cone supported by this subspace, this is made explicit in Definition \ref{definition: apprx LR}. In the rest of this section, more precisely through Proposition \ref{prop: apprx LR} and Proposition \ref{prop: gradients_span_directions} we give more details about the desired properties of the approximation.

%%%%%
\begin{definition}[Approximate rank of function] \label{definition: apprx LR}
The function $f$ is $(\eta, \epsilon)$-approximately rank-$r$, if there exist $\eta \in [0,1)$, $\epsilon > 0$ and subspace $H\subseteq \Rbb^p$ of dimension $r \leq p$, such that for any $\theta \in \Theta$,
\begin{equation*}
    \Vert\nabla f(\theta) - \Pi_H(\nabla f(\theta)) \Vert \leq \eta\Vert\nabla f(\theta)\Vert+\epsilon,
\end{equation*}
where $\Pi_H$ denotes the orthogonal projection operator onto $H$.
\end{definition}
Note that this condition has two error terms represented by two constants $\epsilon$ (additive) and $\eta$ (multiplicative). It therefore considerably relaxes the exact low rank assumption that would require $\forall \theta \in \Rbb^p : \Vert\nabla f(\theta) - \Pi_H(\nabla f(\theta)) \Vert = 0$. Both of these constants are required for the generality of the definition of the approximately low-rank condition. In particular, the role of the additive term is emphasized in the following Proposition \ref{prop: apprx LR} whereas the role of the multiplicative term is emphasized in Proposition \ref{prop: gradients_span_directions} for the particular case of strongly convex functions. 
%%%%%

%%%%
\begin{proposition}[Non-singular Hessian and approximate low-rankness] \label{prop: apprx LR}
If $f$ attains its minimum at $\theta^*\in \Theta$ where $\nabla^2f(\theta^*)$ is invertible and $f$ is approximately low-rank with parameters $(\eta,\epsilon)$ where $\eta<1$ then necessarily $\epsilon > 0$.
\end{proposition}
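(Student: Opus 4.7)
I would argue by contradiction: suppose $\epsilon=0$. Then the defining inequality of Definition~\ref{definition: apprx LR} becomes $\Vert\nabla f(\theta)-\Pi_H(\nabla f(\theta))\Vert \leq \eta\,\Vert\nabla f(\theta)\Vert$ for all $\theta\in\Theta$, or equivalently, by the Pythagorean identity,
$$\Vert\Pi_H(\nabla f(\theta))\Vert^2 \;\geq\; (1-\eta^2)\,\Vert\nabla f(\theta)\Vert^2 \qquad \text{for all } \theta\in\Theta.$$
The goal is to show this forces $H=\Rbb^p$, i.e., $r=p$, which makes the low-rank structure trivial; hence in the non-trivial regime $r<p$ we must have $\epsilon>0$.

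Assuming $r<p$, the main idea is to exhibit a family of points where the $H^{\perp}$-component of $\nabla f$ is not dominated by the $H$-component. I would pick any unit vector $v\in H^{\perp}$. Since $\nabla f(\theta^*)=0$, a first-order Taylor expansion of $\nabla f$ at the minimizer $\theta^*$ gives
$$\nabla f(\theta^*+\delta) \;=\; \nabla^2 f(\theta^*)\,\delta \;+\; o(\Vert\delta\Vert) \qquad \text{as } \delta\to 0.$$
Invertibility of $\nabla^2 f(\theta^*)$ then allows me to choose $\delta_t = t\,(\nabla^2 f(\theta^*))^{-1} v$ so that $\nabla f(\theta^*+\delta_t) = t\,v + o(t)$ as $t\to 0^+$. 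Consequently $\Vert\nabla f(\theta^*+\delta_t)\Vert = t+o(t)$, and since $\Pi_H(v)=0$ and $\Pi_H$ is $1$-Lipschitz, $\Vert\Pi_H(\nabla f(\theta^*+\delta_t))\Vert = o(t)$. Substituting into the displayed inequality above and dividing by $t^2$, the right-hand side tends to $1-\eta^2>0$ while the left-hand side tends to $0$ as $t\to 0^+$, producing the desired contradiction.

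The main technical subtlety will be confirming that the $H$-component of $\nabla f(\theta^*+\delta_t)$ is genuinely $o(t)$ and not merely $O(t)$; this hinges on the observation that the \emph{leading} Taylor term $tv$ is orthogonal to $H$ by construction, so $\Pi_H$ annihilates it exactly and only the Taylor remainder survives under the projection. Invertibility of the Hessian is exactly what lets me target an arbitrary direction in $H^{\perp}$ via infinitesimal perturbations of $\theta^*$; without it, the range of $\nabla^2 f(\theta^*)$ could conceivably lie inside $H$, in which case no such $\delta_t$ would exist and no contradiction would arise. This also clarifies why the proposition's two hypotheses (existence of a minimum and non-singular Hessian there) are both essential to the argument.
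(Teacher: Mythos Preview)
Your proposal is correct and follows essentially the same approach as the paper: Taylor-expand $\nabla f$ at $\theta^*$, use invertibility of $\nabla^2 f(\theta^*)$ to choose the perturbation $\delta_t = t(\nabla^2 f(\theta^*))^{-1}v$ with $v\in H^\perp$ so that $\nabla f(\theta^*+\delta_t)=tv+o(t)$, and derive a contradiction with $\eta<1$. The paper phrases the contradiction via the ratio $\Vert\nabla f-\Pi_H\nabla f\Vert/\Vert\nabla f\Vert\to 1$ rather than your equivalent Pythagorean form, and it leaves implicit the caveat (which you make explicit) that the argument requires $r<p$ so that $H^\perp$ is nontrivial.
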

%%%%
The proof in Appendix \ref{proof: prop apprx LR} is a direct application of the inverse function theorem to $\nabla f$ at $\theta^*$. This result shows that in most settings the approximate rank requires an affine approximation error (i.e., $\epsilon > 0$ is necessary). Note that this does not produce a uniform bound on $\epsilon$ that would hold for any function $f$; such a bound would only come under additional assumptions on the third derivatives of $f$. 

\begin{proposition}[Strong convexity and approximate low-rankness]\label{prop: gradients_span_directions}
If $f$ is $\mu$-strongly convex, for any vector $u$ of unit norm, and any $\Delta>0$ there exists some $\theta\in \Rbb^p$ such that,
\begin{equation*}
    \frac{\nabla f(\theta)}{\|\nabla f(\theta)\|} = u,\quad \text{and}\quad \Vert\nabla f(\theta)\Vert\geq \mu\Delta, \quad \text{where} \quad \|\theta-\theta^*\|<\Delta.
\end{equation*}
As a consequence, if $f$ is $(\eta, \epsilon)$-approximate rank $r<p$ on the ball of radius $\Delta$ around $\theta^*$, i.e., $\Bcal(\theta^*,\Delta)$, then $\epsilon >\mu\Delta(1-\eta)$ or alternatively $\eta>1 - \frac{\epsilon}{\mu \Delta}$.
\end{proposition}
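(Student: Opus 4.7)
The plan is to prove the existence statement by exploiting the fact that the gradient map of a strongly convex function is a bijection onto $\Rbb^p$, and then deduce the consequence by choosing a direction orthogonal to $H$ and invoking the first part.

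First I would construct the witness $\theta$. Fix a unit vector $u$ and set $v := \mu\Delta u$. The auxiliary function $g(\theta) := f(\theta) - \langle v, \theta\rangle$ inherits $\mu$-strong convexity from $f$, hence is coercive and admits a unique minimizer $\theta_v$. The first-order optimality condition $\nabla g(\theta_v)=0$ yields $\nabla f(\theta_v) = \mu \Delta u$, which immediately gives both $\nabla f(\theta_v)/\|\nabla f(\theta_v)\| = u$ and $\|\nabla f(\theta_v)\| = \mu\Delta$. For the distance bound, I would combine the strong-monotonicity consequence of strong convexity,
\begin{equation}
\langle \nabla f(\theta_v) - \nabla f(\theta^*),\, \theta_v - \theta^*\rangle \;\geq\; \mu\,\|\theta_v - \theta^*\|^2,
\end{equation}
with $\nabla f(\theta^*)=0$ and Cauchy--Schwarz to obtain $\|\theta_v - \theta^*\| \leq \|\nabla f(\theta_v)\|/\mu = \Delta$. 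To upgrade this to the strict inequality claimed, I would run the construction with $v = \mu\Delta' u$ for an arbitrary $\Delta' < \Delta$: the resulting $\theta_v$ lies in the open ball $\mathcal{B}(\theta^*,\Delta)$ and satisfies $\|\nabla f(\theta_v)\| = \mu\Delta'$, and by taking $\Delta' \uparrow \Delta$ we obtain a family of witnesses inside $\mathcal{B}(\theta^*,\Delta)$ whose gradient norms approach $\mu\Delta$ in direction $u$.

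For the consequence, since $r < p$ the orthogonal complement $H^\perp$ is non-trivial, so I would pick a unit vector $u \in H^\perp$ and apply the existence claim to get a sequence $\theta_n \in \mathcal{B}(\theta^*,\Delta)$ with $\nabla f(\theta_n) = \mu\Delta_n u$, where $\Delta_n \uparrow \Delta$. Because $u \perp H$, the projection $\Pi_H \nabla f(\theta_n)$ vanishes, so $\|\nabla f(\theta_n) - \Pi_H \nabla f(\theta_n)\| = \|\nabla f(\theta_n)\| = \mu\Delta_n$. The $(\eta,\epsilon)$-approximate low-rank hypothesis on $\mathcal{B}(\theta^*,\Delta)$ then gives
\begin{equation}
\mu\Delta_n \;=\; \|\nabla f(\theta_n) - \Pi_H \nabla f(\theta_n)\| \;\leq\; \eta\,\|\nabla f(\theta_n)\| + \epsilon \;=\; \eta\,\mu\Delta_n + \epsilon,
\end{equation}
which rearranges to $\epsilon \geq \mu \Delta_n (1-\eta)$. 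Passing to the limit $\Delta_n \uparrow \Delta$ yields $\epsilon \geq \mu\Delta(1-\eta)$, and the strict inequality $\epsilon > \mu\Delta(1-\eta)$ follows because each $\theta_n$ lies strictly inside $\mathcal{B}(\theta^*,\Delta)$ and the hypothesis must hold at a witness with $\|\nabla f\|$ strictly below but arbitrarily close to $\mu\Delta$, ruling out the boundary equality.

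The main obstacle I anticipate is handling the strictness of both inequalities (the open ball in the first part and the strict $\epsilon > \mu\Delta(1-\eta)$ in the second), since the direct strong-monotonicity argument only produces non-strict bounds. The limiting argument sketched above circumvents this cleanly without requiring any smoothness assumption or case analysis of when $f$ is locally quadratic; beyond this subtlety, the proof is a short application of the bijectivity of $\nabla f$ and of the definition of approximate low-rankness.
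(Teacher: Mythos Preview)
Your argument is correct in substance and takes a genuinely different route from the paper. The paper proves an auxiliary lemma by maximizing the linear functional $\theta \mapsto \langle u,\theta\rangle$ over the sublevel set $\{\theta : f(\theta)\le f(\theta^*)+\lambda\}$ and reading off the normal direction at the maximizer; it then invokes the Polyak--\L{}ojasiewicz inequality to lower-bound $\|\nabla f\|$. Your approach instead exploits the surjectivity of $\nabla f$ directly: minimizing $f(\theta)-\langle \mu\Delta u,\theta\rangle$ produces a point with $\nabla f = \mu\Delta u$ in one line, and strong monotonicity gives the location bound. This is shorter, avoids the level-set geometry and the compactness argument, and does not need the PL inequality as a separate step.

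One caveat: your final sentence claiming the \emph{strict} inequality $\epsilon > \mu\Delta(1-\eta)$ is not justified. From $\epsilon \ge \mu\Delta_n(1-\eta)$ for all $\Delta_n<\Delta$ you only get $\epsilon \ge \mu\Delta(1-\eta)$ in the limit; there is no mechanism in your argument that rules out equality, and indeed for $f(\theta)=\tfrac{\mu}{2}\|\theta\|^2$ with any $r$-dimensional $H$ one checks that the $(\eta,\epsilon)$ condition holds on $\mathcal{B}(0,\Delta)$ with $\epsilon = (1-\eta)\mu\Delta$ exactly. The same tension is present in the statement itself (and the paper's own proof applies its lemma at the excluded endpoint $\lambda=\mu\Delta^2/2$), so the honest conclusion from either argument is the non-strict bound. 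Your identification of the strictness issue is on point; just don't oversell the resolution.
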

The proof in Appendix \ref{proof: prop gradients_span_directions} shows that the gradient of strongly convex functions typically span all directions. As a consequence, this result shows that there is a trade-off between the strong convexity constant $\mu$ and the approximately low-rank constants $(\eta, \epsilon)$. For a given strongly convex function $f$ it is however still very reasonable to assume that it is approximately low-rank not on the full space $\Theta = \Rbb^p$ but on bounded portions of the optimization space of the form $\Bcal(\theta^*,\Delta)$ with limited $\Delta$. This will turn out to be sufficient in most practical cases.

%%%%%%%%%%%%%%%%%%%%%%%%%%%%%%%%%%%%%%%%%%%%%%%%%%%%%%%%%%%%
\section{Algorithms and theoretical guarantees} \label{sec: results}
%%%%%%%%%%%%%%%%%%%%%%%%%%%%%%%%%%%%%%%%%%%%%%%%%%%%%%%%%%%%

In this section, we state the precise description of the proposed {\lrgd} algorithm, we provide its computational complexity guarantees and conclude with a discussion on practical considerations.

%%%%%%%%%%%%%%%%%%%%%%%%%%%%%%%%%%%%%%
\subsection{{\lrgd} algorithm}
%%%%%%%%%%%%%%%%%%%%%%%%%%%%%%%%%%%%%%

The proposed algorithm, {\lrgd}, is an iterative gradient-based method. It starts by identifying a subspace $H$ of rank $r \leq p$ that is a good candidate to match our definition of approximately low-rank function for the objective $f$, i.e., Definition \ref{definition: apprx LR}. More precisely, we pick $r$ random  models $\theta^1, \cdots, \theta^r$ and construct the matrix $G=[g_1/\Vert g_1 \Vert, \cdots, g_r/\Vert g_r \Vert]$ where we denote $g_j \coloneqq \gr f(\theta^j)$ for $j \in [r]$. Then, assuming that $G$ is full-rank, the active subspace $H$ will be the space induced by the $r$ dominant left singular vectors of $G$. In other words, if we denote $G = U \Sigma V^\top$ the singular value decomposition (SVD) for $G$, we pick $H =\mathrm{span}(U)= \mathrm{span}(u_1,\cdots,u_r)$. Having set the active subspace $H$, {\lrgd} updates the model parameters $\theta_t$ in each iteration $t$ by $\theta_{t+1} = \theta_t - \alpha \grhat f(\theta_t)$ for a proper stepsize $\alpha$.
Note that $\grhat f(\theta_t)$ here denotes the projection of the true gradient $\gr f(\theta_t)$ on the active subspace $H$ which {\lrgd} uses as a low-rank proxy to  $\gr f(\theta_t)$. Computing each approximate gradient $\grhat f$ requires only $r$ (and not $p$) calls to the directional gradient oracle as we have $\grhat f(\theta_t) = \sum_{j=1}^r \partial_{u_j}f(\theta_t)u_j$. Algorithm \ref{alg: LRGD} provides the details of {\lrgd}.

\begin{algorithm}[t]
\textbf{Require:} rank $r \leq p$, stepsize $\alpha$

pick $r$ points $\theta^1, \cdots, \theta^r$ and evaluate gradients $g_j = \gr f(\theta^j)$ for $j \in [r]$

set $G=[g_1/\Vert g_1 \Vert, \cdots, g_r/\Vert g_r \Vert]$

compute SVD for $G$: $G = U \Sigma V^\top$ with $U=[u_1, \cdots, u_r]$ \hfill{\small{{\textit{\quad (Gram-Schmidt or QR is also possible)}}}}

set $H = \mathrm{span}(u_1,\cdots,u_r)$

initialize $\theta_0$

\For{$t=0,1,2,\cdots$}{

compute gradient approximation $\grhat f(\theta_t) = \Pi_H(\gr f(\theta_t)) = \sum_{j=1}^r \partial_{u_j}f(\theta_t)u_j$

update $\theta_{t+1} = \theta_t - \alpha \grhat f(\theta_t)$
}
\caption{Low-Rank Gradient Descent (\lrgd)}\label{alg: LRGD}
\end{algorithm}

\subsection{Oracle complexity analysis for strongly convex setting}

Next, we characterize the oracle complexity of the proposed {\lrgd} method for strongly convex objectives and discuss its improvement over other benchmarks. Let us start from a fairly simple case. As elaborated in Section \ref{sec: setup} and in Proposition \ref{prop: exact LR conditions}, strongly convex functions cannot be exactly low-rank. However, an exactly low-rank function may be strongly convex when restricted to its active subspace. Next theorem characterizes the oracle complexity for {\lrgd} in such scenario.

%%%%

\begin{theorem}[Oracle complexity in exactly low-rank and strongly convex case] \label{thm: exact LR SC}

Let the objective function $f$ be $L$-smooth (Assumption \ref{assumption: smooth}) and exactly rank-$r$ according to Definition \ref{definition: excat LR} where $\gr f(\theta) \in H$, $\forall \theta$. Moreover, assume that $f$ restricted to $H$ is $\mu$-strongly convex (Assumption \ref{assumption: smooth}) with condition number $\kappa \coloneqq L / \mu$. Then, the proposed {\normalfont{\lrgd}} method in Algorithm \ref{alg: LRGD} reaches an $\epsilon$-minimizer of $f$ with oracle complexity 
%%%%
\begin{align}
\ccalC_{{\normalfont{\lrgd}}} = \kappa r \log(\Delta_0/\epsilon) + p r,
\end{align}
%%%%
where $\Delta_0 = f(\theta_0) - f^*$ is the suboptimality of the initialization $\theta_0$.
\end{theorem}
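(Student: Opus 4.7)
My approach will split the complexity into two parts: the one-shot cost of identifying the active subspace $H$, which accounts for the $pr$ term, and the per-iteration cost of running standard gradient descent inside $H$, which accounts for the $\kappa r \log(\Delta_0/\epsilon)$ term. The key observation is that under exact rank-$r$ structure, LRGD is not merely approximating the gradient but computing it exactly.

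\textbf{Step 1 (Identifying $H$ exactly).} First, I would argue that the initialization phase of Algorithm \ref{alg: LRGD} recovers $H$ exactly. Each $g_j = \nabla f(\theta^j)$ lies in $H$ by hypothesis, so $\mathrm{span}(g_1,\dots,g_r) \subseteq H$. For the reverse inclusion, note that by Proposition \ref{prop: exact LR conditions}, $f$ is $H^\perp$-invariant, so $\nabla f(\theta)$ depends only on $\Pi_H(\theta)$; moreover $f$ restricted to $H$ is $\mu$-strongly convex, so the induced map $\Pi_H(\theta) \mapsto \nabla f(\theta)$ is a bijection from $H$ onto $H$ (strong monotonicity plus coercivity). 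Thus for almost every choice of the $r$ points $\theta^1,\dots,\theta^r$, the vectors $g_1,\dots,g_r$ are linearly independent and span $H$, making $G$ full rank and ensuring that $\mathrm{span}(u_1,\dots,u_r) = H$. This phase costs exactly $r$ full-gradient evaluations, i.e., $pr$ directional oracle calls.

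\textbf{Step 2 (LRGD reduces to exact GD on $H$).} Once $H$ is recovered, at every iterate $\theta_t$ we have $\nabla f(\theta_t) \in H$, so $\grhat f(\theta_t) = \Pi_H(\nabla f(\theta_t)) = \nabla f(\theta_t)$. Therefore the update $\theta_{t+1} = \theta_t - \alpha \nabla f(\theta_t)$ is exact gradient descent, not an approximation. Since $f$ is $L$-smooth on $\Theta$ and $\mu$-strongly convex when restricted to $H$, and since the iterates $\theta_t - \theta_0$ stay in $H$ (because all gradient updates lie in $H$), the trajectory $\{\theta_t\}$ is identical to running GD on the strongly convex restriction $f|_{\theta_0 + H}$.

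\textbf{Step 3 (Invoking the classical rate).} I would then apply the standard convergence guarantee for GD on $L$-smooth $\mu$-strongly convex functions (e.g., \cite{Nesterov2004}) with stepsize $\alpha = 1/L$, giving
\begin{equation}
f(\theta_T) - f^* \leq \left(1 - \tfrac{1}{\kappa}\right)^T \bigl(f(\theta_0) - f^*\bigr) = \left(1 - \tfrac{1}{\kappa}\right)^T \Delta_0.
\end{equation}
Setting the right-hand side to $\epsilon$ and using $\log(1/(1-1/\kappa)) \geq 1/\kappa$ yields $T = \lceil \kappa \log(\Delta_0/\epsilon) \rceil$ iterations suffice. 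Each iteration costs exactly $r$ directional derivative calls to form $\grhat f(\theta_t)$. Summing the subspace-identification cost $pr$ and the iteration cost $rT$ gives the claimed bound $\kappa r \log(\Delta_0/\epsilon) + pr$.

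\textbf{Main obstacle.} The only non-routine step is Step 1: justifying that $r$ randomly chosen gradients are linearly independent with probability one, which requires the bijection argument between $H$ and $\nabla f(\Theta)$ supplied by strong convexity on $H$ combined with the $H^\perp$-invariance from Proposition \ref{prop: exact LR conditions}. Once this is in place, the rest is bookkeeping plus the textbook GD rate, and the improvement over vanilla GD comes transparently from replacing $p$-dimensional gradient evaluations with $r$-dimensional ones after a one-time $pr$ investment.
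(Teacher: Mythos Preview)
Your proposal is correct and follows essentially the same approach as the paper: identify $H$ at cost $pr$, observe that exact low-rankness makes the projected gradient equal the true gradient so that \lrgd{} coincides with \gd{}, and then invoke the standard linear rate for strongly convex \gd{} on the restriction (the paper parametrizes this restriction via $F(\omega)=f(U\omega)$ and checks $F^*=f^*$, whereas you work directly on the affine slice $\theta_0+H$, which is a purely cosmetic difference). Your Step~1 justification that $r$ random gradients span $H$ is in fact more carefully argued than in the paper, which simply takes the recovery of $H$ for granted.
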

%%%%
%%%%
We defer the proof to Appendix \ref{sec: exact LR SC proof}. Next, we provide oracle complexity guarantees for approximately low-rank and strongly convex functions and discuss the benefit of {\lrgd} to other benchmarks.

%%%%
\begin{theorem}[Oracle complexity in approximately low-rank and strongly convex case] \label{thm: apprx LR SC}
Let the objective function $f$ be $L$-smooth and $\mu$-strongly convex with condition number $\kappa \coloneqq L / \mu$, i.e., Assumptions \ref{assumption: smooth} and \ref{assumption: SC} hold. We also assume that $f$ is $(\eta,\sqrt{\mu \epsilon/5})$-approximately rank-$r$ according to Definition \ref{definition: apprx LR} and the following condition holds 
%%%%
\begin{align}
    \eta \left( 1 + \frac{2r}{\sigma_r} \right)
    +
    \frac{2r}{\sigma_r \epsilon'} \sqrt{\frac{\mu \epsilon}{5}}
    \leq \frac{1}{\sqrt{10}},
\end{align}
%%%%
where $\sigma_r$ is the smallest singular value of the matrix $G = [g_1/{\Vert g_1\Vert},\cdots,g_r/{\Vert g_r\Vert}]$ with $g_j \coloneqq \gr f(\theta^j)$ and $\Vert g_j \Vert > \epsilon'$ for all $j \in [r]$. Then, the proposed {\normalfont{\lrgd}} method in Algorithm \ref{alg: LRGD} reaches an $\epsilon$-minimizer of $f$ with oracle complexity 
%%%%
\begin{align}
\ccalC_{{\normalfont{\lrgd}}}
=
16 \kappa r \log(2 \Delta_0 / \epsilon) + pr.
\end{align}
%%%%
\end{theorem}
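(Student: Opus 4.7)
The plan is to treat LRGD as an instance of inexact gradient descent: in each iteration the true gradient $\nabla f(\theta_t)$ is replaced by its projection $\hat\nabla f(\theta_t) = \Pi_{\hat H}(\nabla f(\theta_t))$ onto the SVD-based subspace $\hat H = \mathrm{span}(u_1,\ldots,u_r)$ built by the algorithm. The argument splits naturally into three stages: bound the per-iteration gradient approximation error, plug this into a strongly convex inexact-GD convergence analysis, and sum oracle calls.

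For the first stage, let $H$ be the rank-$r$ subspace from Definition \ref{definition: apprx LR}. Applying the approximate-rank hypothesis to each sampled gradient $g_j = \nabla f(\theta^j)$ and using $\|g_j\| > \epsilon'$ shows that each normalized column of $G$ lies within $\delta := \eta + \sqrt{\mu\epsilon/5}/\epsilon'$ of its projection onto $H$. The perturbation matrix $E := G - \Pi_H(G)$ therefore has $\|E\|_{\mathrm{op}} \leq r\delta$ by a column-wise triangle inequality, and since the $r$-th singular value of $G$ is lower bounded by $\sigma_r$, a Davis--Kahan/Wedin-type estimate gives $\|\Pi_H - \Pi_{\hat H}\|_{\mathrm{op}} \leq 2r\delta/\sigma_r$. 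Combining this with the approximate-rank bound $\|\nabla f(\theta) - \Pi_H(\nabla f(\theta))\| \leq \eta\|\nabla f(\theta)\| + \sqrt{\mu\epsilon/5}$ via the triangle inequality yields the key relative-error estimate
\[
\|\nabla f(\theta) - \hat\nabla f(\theta)\| \leq \rho \|\nabla f(\theta)\| + \sqrt{\mu\epsilon/5},
\]
where $\rho = \eta(1+2r/\sigma_r) + (2r/(\sigma_r\epsilon'))\sqrt{\mu\epsilon/5}$, and the assumed hypothesis forces $\rho \leq 1/\sqrt{10}$.

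For the second stage, I would apply $L$-smoothness with step $\alpha = 1/L$, using the Pythagorean identity $\|\hat\nabla f(\theta_t)\|^2 = \|\nabla f(\theta_t)\|^2 - \|e_t\|^2$ (since $e_t := \nabla f(\theta_t) - \hat\nabla f(\theta_t)$ lies in $\hat H^\perp$), together with the consequence $\|e_t\|^2 \leq 2\rho^2\|\nabla f(\theta_t)\|^2 + 2\mu\epsilon/5$ of the first-stage bound. This implies $\|\hat\nabla f(\theta_t)\|^2 \geq \tfrac{4}{5}\|\nabla f(\theta_t)\|^2 - 2\mu\epsilon/5$, and substitution into the smoothness descent inequality, followed by the PL inequality $\|\nabla f(\theta_t)\|^2 \geq 2\mu(f(\theta_t) - f^*)$, produces a recursion of the form $f(\theta_{t+1}) - f^* \leq (1 - c_1/\kappa)(f(\theta_t) - f^*) + c_2\epsilon$. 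Iterating, the linear term dominates until $f(\theta_T) - f^* \leq \epsilon/2$ at $T = 16\kappa\log(2\Delta_0/\epsilon)$ iterations, with constants chosen so that the steady-state residual from the additive error contributes at most another $\epsilon/2$.

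Finally, I would account for oracle calls: the initialization computes $r$ full gradients at cost $pr$ directional derivatives (the SVD itself uses no oracle access), and each of the $T$ descent iterations costs $r$ directional derivatives in forming $\hat\nabla f(\theta_t) = \sum_{j=1}^r \partial_{u_j}f(\theta_t)u_j$, giving the announced total of $16\kappa r\log(2\Delta_0/\epsilon) + pr$. The hard part will be stage one: obtaining a subspace-perturbation bound sharp enough that the algebraic constants line up exactly with the stated hypothesis, and in particular tracking how the spectral-gap parameter $\sigma_r$ controls the recovery of $H$ by $\hat H$ under the combined additive/multiplicative approximate-rank error model. Without the $\sigma_r$ lower bound, $\hat H$ could be arbitrarily misaligned with $H$ and no error bound of the above form would be available.
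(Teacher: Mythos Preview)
Your proposal is correct and, in stage one, essentially identical to the paper: both bound $\|G-\Pi_H(G)\|_{\op}$ columnwise by $r(\eta+\sqrt{\mu\epsilon/5}/\epsilon')$, invoke Wedin's $\sin\Theta$ theorem to control $\|\Pi_H-\Pi_{\hat H}\|_{\op}$ via $\sigma_r$, and combine with the approximate-rank bound to obtain $\|\nabla f(\theta)-\hat\nabla f(\theta)\|\le\tilde\eta\|\nabla f(\theta)\|+\sqrt{\mu\epsilon/5}$ with $\tilde\eta\le 1/\sqrt{10}$.

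The genuine difference is in stage two. The paper treats $\hat\nabla f(\theta_t)$ as a \emph{generic} inexact gradient: it writes $\hat\nabla f=\nabla f+e_t$, applies Young's inequality to $\langle\nabla f,e_t\rangle$ and to $\|\hat\nabla f\|^2\le 2\|\nabla f\|^2+2\|e_t\|^2$, and is then forced to take the conservative stepsize $\alpha=1/(8L)$ to make the constants close, arriving at the contraction factor $1-1/(16\kappa)$. Your route exploits the additional structure that $\hat\nabla f(\theta_t)$ is an \emph{orthogonal projection} of $\nabla f(\theta_t)$: the identities $\langle\nabla f,\hat\nabla f\rangle=\|\hat\nabla f\|^2$ and $\|\hat\nabla f\|^2=\|\nabla f\|^2-\|e_t\|^2$ hold exactly, so the smoothness descent lemma with the natural stepsize $\alpha=1/L$ already yields $f(\theta_{t+1})\le f(\theta_t)-\tfrac{1}{2L}\|\hat\nabla f(\theta_t)\|^2$. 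Your argument is cleaner and in fact gives a strictly better contraction rate (roughly $1-4/(5\kappa)$ rather than $1-1/(16\kappa)$), so the stated bound $T=16\kappa\log(2\Delta_0/\epsilon)$ follows a fortiori. The paper's approach has the minor advantage of being reusable verbatim for any inexact-gradient method, whereas yours is specific to projection-type approximations; for the present theorem, your route is the sharper one.
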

%%%%
%%%%
We defer the proof to Appendix \ref{sec: apprx LR SC proof}. Theorems \ref{thm: exact LR SC} and \ref{thm: apprx LR SC} suggest that in the strongly convex setting and for approximately low-rank functions with sufficiently small parameters, {\lrgd} is able to reduce the oracle complexity of {\gd} which is $\ccalC_{{\normalfont{\gd}}} = \ccalO(\kappa p \log(1 / \epsilon))$ to $\ccalC_{{\normalfont{\lrgd}}} = \ccalO(\kappa r \log(1 / \epsilon) + pr)$. This gain is particularly significant as the term depending on the accuracy $\epsilon$ does not scale with the model parameters dimension $p$, rather scales with the rank $r$ which may be much smaller than $p$.

\subsection{Oracle complexity analysis for non-convex setting}

Next, we turn our focus to non-convex and smooth objectives and discuss the benefits of \texttt{LRGD} on the overall coracle complexity for such functions. Let us begin with exactly low-rank objectives.

%%%%
 \begin{theorem}[Oracle complexity in exactly low-rank and non-convex case] \label{thm: exact LR NC}
Let the objective function $f$ be $L$-smooth, i.e., Assumption \ref{assumption: smooth} holds. Moreover, assume that the gradient of $f$ is exactly rank-$r$ according to Definition \ref{definition: excat LR}. Then, the proposed {\normalfont{\lrgd}} method in Algorithm \ref{alg: LRGD} reaches an $\epsilon$-stationary point of $f$ with oracle complexity 
%%%%
\begin{align}
\ccalC_{{\normalfont{\lrgd}}} = \frac{2 r L \Delta_0}{\epsilon^2} + p r.
\end{align}
%%%%
\end{theorem}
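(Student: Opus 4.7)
The plan is to leverage the fact that in the \emph{exactly} low-rank setting, the subspace identified by \lrgd{} coincides with the true active subspace, so the approximate gradient $\grhat f(\theta_t)$ actually equals the true gradient $\gr f(\theta_t)$. Consequently \lrgd{} is algorithmically identical to vanilla \gd{} applied to $f$, and the only gain comes from cheaper per-iteration oracle cost.

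First I would handle the initialization phase. By Definition \ref{definition: excat LR}, all gradients of $f$ lie in a fixed $r$-dimensional subspace $H^\star$. The $r$ sample gradients $g_1,\dots,g_r$ computed in the initialization therefore lie in $H^\star$, and for generic choices of $\theta^1,\dots,\theta^r$ they are linearly independent, so $\spn(g_1,\dots,g_r)=H^\star$. Taking the SVD $G = U\Sigma V^\top$ of the normalized matrix $G$, the columns of $U$ then form an orthonormal basis of $H^\star$, which is exactly the subspace $H$ used in subsequent iterations. Cost of this phase: $r$ full-length gradient evaluations, i.e., $rp$ directional oracle calls. (If the chosen $\theta^j$ happen to produce rank-deficient $G$, one can resample; this occurs on a measure-zero set.)

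Next I would run the standard descent analysis on $f$, observing that for every iterate $\theta_t$,
\begin{equation}
\grhat f(\theta_t) \;=\; \Pi_H(\gr f(\theta_t)) \;=\; \gr f(\theta_t),
\end{equation}
since $\gr f(\theta_t)\in H^\star = H$. Pick stepsize $\alpha=1/L$. Using $L$-smoothness (Assumption \ref{assumption: smooth}) applied to $\theta_{t+1}=\theta_t-\alpha\grhat f(\theta_t)=\theta_t-\frac{1}{L}\gr f(\theta_t)$, the usual descent lemma yields
\begin{equation}
f(\theta_{t+1}) \;\leq\; f(\theta_t) - \frac{1}{2L}\Vert \gr f(\theta_t)\Vert^2.
\end{equation}
Telescoping this over $t=0,1,\dots,T-1$ and lower-bounding by $f^\star$ gives
\begin{equation}
\min_{0\leq t<T}\Vert\gr f(\theta_t)\Vert^2 \;\leq\; \frac{2L(f(\theta_0)-f^\star)}{T} \;=\; \frac{2L\Delta_0}{T}.
\end{equation}
To guarantee $\Vert\gr f(\theta_t)\Vert\leq \epsilon$ at some iterate, it suffices to take $T \geq 2L\Delta_0/\epsilon^2$.

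Finally I would tally the oracle cost. Each of the $T$ descent iterations computes $r$ directional derivatives (one per basis vector $u_j$) to form $\grhat f(\theta_t)$, contributing $rT = 2rL\Delta_0/\epsilon^2$ oracle calls, and adding the $rp$ calls from the initialization gives the claimed bound $\Ccal_{\lrgd} = 2rL\Delta_0/\epsilon^2 + rp$. I expect the only mildly subtle point to be the argument that the SVD-based subspace identification exactly recovers $H^\star$; the descent estimate itself is just the textbook smooth non-convex \gd{} calculation, made possible here precisely because exact low-rankness eliminates any approximation error between $\grhat f$ and $\gr f$.
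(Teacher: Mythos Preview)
Your proposal is correct and follows essentially the same approach as the paper: observe that exact low-rankness makes $\grhat f(\theta_t)=\gr f(\theta_t)$ so that \lrgd{} reduces to \gd{}, apply the standard smooth non-convex descent lemma with $\alpha=1/L$ and telescope to get $\min_t\|\gr f(\theta_t)\|^2\le 2L\Delta_0/T$, then add the $rp$ initialization cost to the $rT$ iteration cost. You are in fact slightly more explicit than the paper about why the sampled subspace equals $H^\star$ (the paper simply asserts the iterates coincide with \gd{}), but the argument is otherwise the same.
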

%%%%
%%%%
We defer the proof to Appendix \ref{sec: exact LR NC proof}. Next, we state the oracle complexity result for approximately low-rank objectives.
%%%%

%%%%
\begin{theorem}[Oracle complexity in approximately low-rank and non-convex case] \label{thm: apprx LR NC}
Let the objective function $f$ be $L$-smooth, i.e., Assumption \ref{assumption: smooth} holds. Moreover, assume that $f$ is $(\eta,\epsilon/3)$-approximately rank-$r$ according to Definition \ref{definition: apprx LR}. Moreover, we assume that the following condition holds
%%%%
\begin{align}
    \eta \left( 1 + \frac{2r}{\sigma_r} \right)
    +
    \frac{2r \epsilon}{3 \sigma_r \epsilon'} 
    \leq \frac{1}{\sqrt{10}},
\end{align}
%%%%
where $\sigma_r$ is the smallest singular value of the matrix $G = [g_1/{\Vert g_1\Vert},\cdots,g_r/{\Vert g_r\Vert}]$ with $g_j \coloneqq \gr f(\theta^j)$ and $\Vert g_j \Vert > \epsilon'$ for all $j \in [r]$. Then,  for stepsize $\alpha = \frac{1}{8L}$, {\normalfont{\texttt{LRGD}}} in Algorithm \ref{alg: LRGD} reaches an $\epsilon$-stationary point with oracle complexity 
%%%%
\begin{align}
    \ccalC_{{\normalfont{\texttt{LRGD}}}} = 
    \frac{72 r L \Delta_0}{\epsilon^2} + p r.
\end{align}
%%%%
\end{theorem}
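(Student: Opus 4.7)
The plan is to mimic the exactly low-rank non-convex analysis of Theorem \ref{thm: exact LR NC} for the descent step, and to borrow the subspace-perturbation machinery from the strongly convex approximate case (Theorem \ref{thm: apprx LR SC}) to translate bounds on $\|\hat\nabla f(\theta_t)\|$ into bounds on $\|\nabla f(\theta_t)\|$. First I would apply Assumption \ref{assumption: smooth} to the update $\theta_{t+1} = \theta_t - \alpha\,\hat\nabla f(\theta_t)$. Because $\hat\nabla f(\theta_t) = \Pi_H \nabla f(\theta_t)$ is an orthogonal projection, $\langle \nabla f(\theta_t), \hat\nabla f(\theta_t)\rangle = \|\hat\nabla f(\theta_t)\|^2$, giving
\[
f(\theta_{t+1}) \leq f(\theta_t) - \alpha\Bigl(1 - \tfrac{L\alpha}{2}\Bigr)\|\hat\nabla f(\theta_t)\|^2.
\]
With $\alpha = 1/(8L)$, the per-step decrease is at least $\tfrac{15}{128 L}\|\hat\nabla f(\theta_t)\|^2$.

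The core intermediate estimate I need is of the form $\|\hat\nabla f(\theta_t)\|^2 \geq c_1\|\nabla f(\theta_t)\|^2 - c_2\epsilon^2$ for explicit constants. To obtain it, let $H^\star$ be a subspace realizing the $(\eta,\epsilon/3)$-approximate rank-$r$ property of Definition \ref{definition: apprx LR}. For each initial point $\theta^j$ with $g_j = \nabla f(\theta^j)$ and $\|g_j\| > \epsilon'$, the approximate low-rank property yields
\[
\bigl\| g_j/\|g_j\| - \Pi_{H^\star}(g_j/\|g_j\|)\bigr\| \leq \eta + \frac{\epsilon}{3\|g_j\|} \leq \eta + \frac{\epsilon}{3\epsilon'}.
\]
Hence the matrix $G$ of normalized gradients lies within $\sqrt{r}\bigl(\eta + \epsilon/(3\epsilon')\bigr)$ in operator norm of its projection $\Pi_{H^\star} G$. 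A Wedin/Davis--Kahan-type bound, using the assumed lower bound $\sigma_r$ on the smallest singular value of $G$, then gives $\|\Pi_H - \Pi_{H^\star}\|_{\op} \lesssim \tfrac{r}{\sigma_r}\bigl(\eta + \epsilon/(3\epsilon')\bigr)$. Combining with the per-point approximate low-rank bound on $\nabla f(\theta_t)$ produces
\[
\|\nabla f(\theta_t) - \hat\nabla f(\theta_t)\| \leq \Bigl[\eta\Bigl(1 + \tfrac{2r}{\sigma_r}\Bigr) + \tfrac{2r\epsilon}{3\sigma_r\epsilon'}\Bigr]\|\nabla f(\theta_t)\| + \tfrac{\epsilon}{3},
\]
and the theorem's standing hypothesis forces the bracketed factor to be at most $1/\sqrt{10}$. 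Squaring and using $(a+b)^2 \leq 2a^2 + 2b^2$ then gives $\|\hat\nabla f(\theta_t)\|^2 \geq \tfrac{9}{10}\|\nabla f(\theta_t)\|^2 - O(\epsilon^2)$.

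To finish, I would sum the descent lemma over $t=0,\dots,T-1$ to obtain $\sum_t \|\hat\nabla f(\theta_t)\|^2 \leq \tfrac{128 L\Delta_0}{15}$. If $\|\nabla f(\theta_t)\| > \epsilon$ for every $t < T$, the subspace bound in the previous paragraph forces $\|\hat\nabla f(\theta_t)\|^2 \geq c\,\epsilon^2$ for a universal constant $c$, whence $T = O(L\Delta_0/\epsilon^2)$ with a constant that collects into the stated $72$. Each iteration uses $r$ directional derivatives, and the one-time cost of forming $H$ is $pr$ oracle calls (computing $r$ full gradients), giving the total $\tfrac{72\,r L\Delta_0}{\epsilon^2} + pr$.

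\textbf{Main obstacle.} The delicate step is the subspace perturbation in the second paragraph: bookkeeping the multiplicative ($\eta$) and additive ($\epsilon/3$) error terms through the SVD of $G$ so that the final bound on $\|\Pi_H - \Pi_{H^\star}\|_{\op}$ matches exactly the form of the standing hypothesis. Once this lemma is in place the non-convex argument is essentially identical to the exactly low-rank case of Theorem \ref{thm: exact LR NC}, with the constant $72$ replacing $2$ to absorb the stepsize shrinkage ($\alpha = 1/(8L)$ rather than $1/L$) and the $9/10$ factor lost to subspace misalignment.
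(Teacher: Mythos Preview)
Your proposal is correct and reaches the stated complexity, but it follows a slightly different path from the paper's own proof. Both arguments share the same subspace-perturbation lemma (the paper's Lemma~\ref{lemma: LRGD gr apprx error}, which is exactly the Wedin-type bound you sketch in your second paragraph) to obtain
\[
\|\nabla f(\theta_t)-\hat\nabla f(\theta_t)\|\le \tilde\eta\,\|\nabla f(\theta_t)\|+\tfrac{\epsilon}{3},\qquad \tilde\eta\le 1/\sqrt{10}.
\]
The difference is in how the descent step is organised. The paper does \emph{not} use the orthogonal-projection identity $\langle\nabla f,\hat\nabla f\rangle=\|\hat\nabla f\|^2$; instead it writes $\hat\nabla f=\nabla f+e$, expands the smoothness inequality, and bounds the cross term $\langle\nabla f,e\rangle$ and $\|\hat\nabla f\|^2$ by Young-type inequalities, arriving directly at a one-step decrease in $\|\nabla f(\theta_t)\|^2$:
\[
f(\theta_{t+1})\le f(\theta_t)-\tfrac{1}{32L}\|\nabla f(\theta_t)\|^2+\tfrac{5}{32L}(\epsilon/3)^2,
\]
after which summing over $t$ gives $\min_t\|\nabla f(\theta_t)\|^2\le \tfrac{32L\Delta_0}{T}+\tfrac{5}{9}\epsilon^2$ and $T=72L\Delta_0/\epsilon^2$ makes the right side exactly $\epsilon^2$. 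You instead exploit the projection structure to get a clean, error-free descent in $\|\hat\nabla f(\theta_t)\|^2$ and only invoke Pythagoras at the end to convert back to $\|\nabla f(\theta_t)\|$. Your route is arguably more transparent and, if the constants are tracked carefully (your ``$9/10$'' should be $1-2\tilde\eta^2\ge 4/5$ after applying $(a+b)^2\le 2a^2+2b^2$), it actually yields a constant smaller than $72$; the paper's approach has the advantage of applying verbatim to any inexact-gradient scheme, not only projections, and is literally reused from the strongly convex proof of Theorem~\ref{thm: apprx LR SC}.
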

%%%%
%%%%
We defer the proof to Appendix \ref{sec: apprx LR NC proof}. Theorems \ref{thm: exact LR NC} and \ref{thm: apprx LR NC} further highlight the advantage of {\lrgd} in the non-convex scenario. More precisely, it is well-understood that {\gd} is able to find an $\epsilon$-stationary point of a smooth and non-convex function in $\ccalO(1/\epsilon^2)$ iterations. Since each iteration costs $p$ directional derivatives, the total oracle complexity scales as $\ccalO(p/\epsilon^2)$ where $p$ denotes the dimension of model parameters. This can be costly specifically for deep neural network models with up to millions of parameters.  On the other hand, for approximately rank-$r$ objectives with sufficiently small parameters (see Definition \ref{definition: apprx LR}), {\lrgd} is able to slash the oracle complexity to $\ccalO(r/\epsilon^2 + pr)$. This is particularly significant for low-rank functions with rank $r \ll p$.

%%%%

Let us point out that although {\lrgd}'s guarantees stated in the previous theorems require the objective to be sufficiently low-rank on the entire model parameters space, the proposed low-rank approach works with less restrictive assumptions on the objective. This is our motivation to build up on {\lrgd} and propose other variants in th following.

\subsection{Variants of {\lrgd}}

Algorithm \ref{alg: LRGD}  ({\lrgd}) can be extended to broader settings, especially in order to make it less sensitive to assumptions on the function $f$, since such assumptions are rarely known a priori before the optimization task is defined. In this section, we present two such extensions: 

\begin{enumerate}[label=\arabic*),leftmargin=*,noitemsep,nolistsep]
    \item Algorithm \ref{alg: subspace search} (in Appendix \ref{ap: adaptive lr}) provides a variant of {\lrgd} where the active subspace is updated as soon as the low-rank descent converges. Note that this guarantees the termination of the optimization algorithm on any function -- and not just on functions satisfying the assumptions of the theoretical results. This algorithm is particularly adapted to situations where the function is \emph{locally} approximately rank $r$ as the active subspace is updated through a local sampling (that is also used to accelerate the descent with $r$ iterations of $\gd$). This algorithm still takes the parameter $r$ as an input. It is used for empirical tests in Section \ref{sec: sims}.
    \item Algorithm \ref{alg: adaptive LRGD} (in Appendix \ref{ap: adaptive lr}) provides a variant of $\lrgd$ that is adapted to situations where the rank $r$ is unknown. In this algorithm, the active subspace is made larger as when the algorithm reaches convergence on the corresponding subspace. As a result the rank $r$ goes from $1$ to $p$. If the function is rank $r^*$ all iterations where the rank $r$ goes from $r^*$ to $p$ will be cost-less. Note that in some circumstances, the arithmetic progression of the rank $\mathrm{update}(r) = r+1$ can be can be advantageously replaced by a geometric progression $\mathrm{update}(r) = 2 r$. The empirical evaluation of this variant is left for future work. 
\end{enumerate}

%%%%%%%%%%%%%%%%%%%%%%%%%%%%%%%%%%%%%%%%%%%%%%%%%%%%%%%%%%%%
\section{Numerical simulations} \label{sec: sims}
%%%%%%%%%%%%%%%%%%%%%%%%%%%%%%%%%%%%%%%%%%%%%%%%%%%%%%%%%%%%

In this section we attempt to demonstrate the utility of the $\lrgd$ method (specifically, the iterative version in Algorithm \ref{alg: subspace search} in Appendix \ref{ap: adaptive lr}) through numerical simulations. We start in Figure \ref{fig: gd_lrgd_iterations} by a simple two-dimensional example ($p=2$ and $r=1$) to help building intuition about the method. In this experiment we observe that even when the $\lrgd$ method makes a more important number of iterations than the vanilla $\gd$ methods (left), it may be as efficient in terms of oracle calls (right). We then argue with Tables \ref{tab: table half}, \ref{tab: table fourth}, \ref{tab: table tenth} (Tables \ref{tab: table fourth}, \ref{tab: table tenth} are in Appendix \ref{ap: additional_experiment}) that the $\lrgd$ method significantly outperforms the vanilla $\gd$ method in ill-conditioned setups.

\begin{figure}[h!]%
    \centering
    \includegraphics[width=4.5cm]{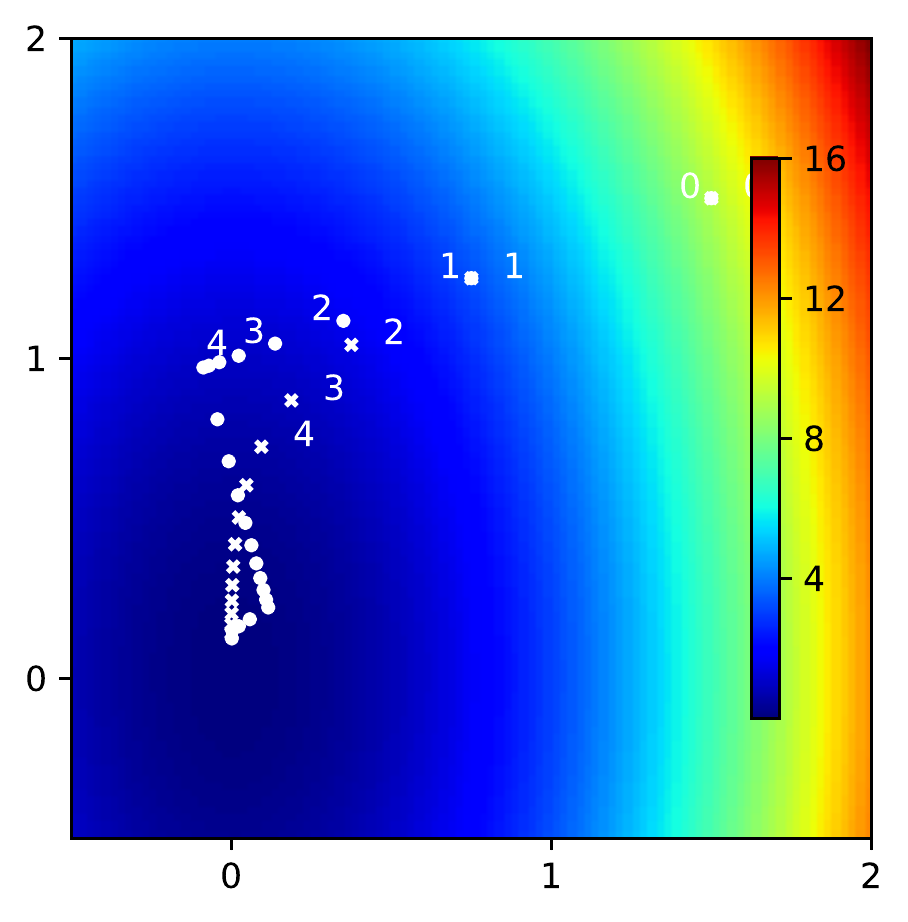}
    \hspace{0.5cm}
    \includegraphics[width=4.5cm]{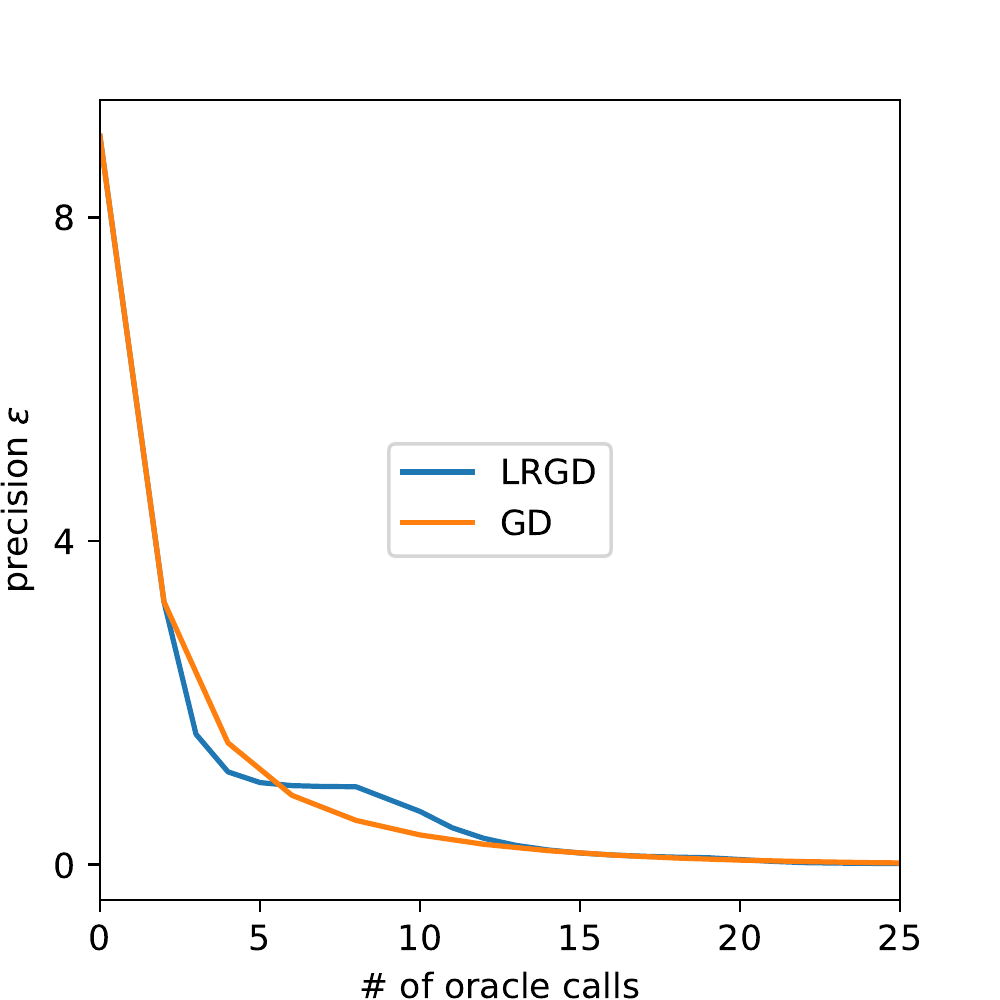}
    \caption{Iterations of $\gd$ (crosses) and $\lrgd$ (dots) on a quadratic function $f(\theta) = \theta^\top H\theta$ with $H = [
    3 \enspace 0; 0 \enspace 1]$
    and $\theta_0 = (1.5 \enspace 1.5)$. Though $\lrgd$ makes more iterations (21 versus 13), it uses slightly less oracle calls (25 versus 26). Experiments ran with $\alpha = 1/15$ and $\epsilon = 0.02$.
    }%
    \label{fig: gd_lrgd_iterations}%
\end{figure}

A close observation of Figure \ref{fig: gd_lrgd_iterations} and of the subsequent tables allows to make the following observations about situations where the $\lrgd$ method outperforms $\gd$.
\begin{enumerate}[label=\arabic*),leftmargin=*,noitemsep,nolistsep]
    \item Even for functions that are not approximately low-rank -- as in the example from Figure \ref{fig: gd_lrgd_iterations} -- using $\lrgd$ instead of $\gd$ does not seem to hurt in terms of number of oracle calls. 
    \item In the results of Table \ref{tab: table half}, $\lrgd$ outperforms $\gd$ by a factor at most  $p/r = 2$ that this factor is nearly attained in some cases (bottom-right of the table). This is in coherence with the theoretical results from Section \ref{sec: results} and the motivation from the introduction stating that the benefits of the $\lrgd$ method are indeed particularly visible when the ratio $p/r$ is higher.
    \item As observed in Figure \ref{fig: gd_lrgd_iterations}, the $\lrgd$ method converges after a series of phases that end with the convergence of the descent on a given subspace. We can distinctly see two phases (iteration 1 to 7 and iteration 7 to 17) and can roughly guess two others (iteration 17 to 19 and iteration 19 to 21). These phases traduce in a series of waterfalls (Figure \ref{fig: gd_lrgd_iterations}, right) that each tend to be sharper than the single waterfall of $\gd$. 
    \item The $\lrgd$ method is particularly efficient when consecutive gradient evaluations are very correlated. This typically because when the step-size $\alpha$ is too small with respect to the local variations of the function. The step-size is determined to satisfy $\alpha<1/L$. It is therefore smaller for ill-conditioned functions -- i.e., examples on the right of Table \ref{tab: table half}.
\end{enumerate}

\begin{table}[h!]
  \centering
  \begin{tabular}{ccccc}
    \toprule
    \multicolumn{5}{c}{$H$}                   \\
    \cmidrule(r){2-5}
    $\theta_0$     
    & 
    $\begin{pmatrix}
    1 & 0\\
    0 & 1
    \end{pmatrix}$
    & 
    $\begin{pmatrix}
    10 & 0\\
    0 & 1
    \end{pmatrix}$
    &
    $\begin{pmatrix}
    100 & 0\\
    0 & 1
    \end{pmatrix}$
    &
    $\begin{pmatrix}
    1000 & 0\\
    0 & 1
    \end{pmatrix}$
    \\
    \midrule
    $\begin{pmatrix} 1.00 & 0.00 \end{pmatrix}$ & 2 (5)  & 2 (5) & 2 (5) & 2 (5)  \\
    $\begin{pmatrix} 0.87 & 0.50 \end{pmatrix}$ & 2 (5) & 22 (20) & 230 (130) & 2302 (1190)  \\
    $\begin{pmatrix} 0.71 & 0.71 \end{pmatrix}$ & 2 (5) & 30 (26) & 300 (166) & 3004 (1536)  \\
    $\begin{pmatrix} 0.50 & 0.87 \end{pmatrix}$ & 2 (5) & 34 (31) & 340 (195) & 3410 (1746)  \\
    $\begin{pmatrix} 0.00 & 1.00 \end{pmatrix}$ & 2 (5) & 36 (22) & 368 (188) & 3688 (1848)  \\
    \bottomrule
  \end{tabular}
  \vspace{2mm}
  \caption{Number of oracle calls before convergence (with $\epsilon = 0.1$) of $\gd$ and $(\lrgd)$ with $r=1$ on quadratic functions, $f(\theta) = \theta^{\top} H \theta$, for different Hessians $H$ with varying condition numbers and different initialization $\theta_0$ on the unit circle. Step size $\alpha$ is set to optimal for $\gd$, i.e.,
  $1/(2L)$ where $L\in \{1, 10, 100, 1000\}$.
  }
  \label{tab: table half}
\end{table}

\textbf{Conclusion.} This work proposes an optimization approach that leverages low-rank structure to reduce the number of directional derivative queries. We believe that other optimization methods could benefit from incorporating such an approach.

%%%%%%%%%%%%%%%%%%%%%%%%%%%%%%%%%%%%%%%%%%%%%%%%%%%%%%%%%%%%

\section*{\center Appendices}

%%%%%%%%%%%%%%%%%%%%%%%%%%%%%%%%%%%%%%%%%%%%%%%%%%%%%%%%%%%%

\appendix

\section{Notation}\label{ap: notation}

We outline some key notation used in this work. For any vectors $x,y \in \R^p$, we let $\langle x,y\rangle$ denote the Euclidean inner product between $x$ and $y$, and $\|x\|$ denote the Euclidean $\ell^2$-norm. For a given $r>0$, we let $\Bcal(x,r) = \{y\in \Rbb^p : \|x-y\|\leq r\}$ denote the closed ball of radius $r$ with center $x$. For any linear subspace $H \subseteq \R^p$, we use $H^{\perp}$ to represent its orthogonal complement subspace. We define $\Pi_H : \R^p \rightarrow H$ as the orthogonal projection operator onto $H$ such that for all $x \in \Rbb^p$, $ \Pi_H(x) \coloneqq \argmin_{y\in H}\|x-y\|$. Note that for any orthonormal basis $\{u_1, \dots, u_r\}$ of $H$, we have
$\Pi_H(x) = \sum_{i\in [r]}\langle x , u_i \rangle  u_i$. For any matrix $A \in \R^{p \times q}$, we let $A^{\top}$ be the transpose of $A$, $\|A\|_{\mathrm{F}}$ be the Frobenius norm of $A$, $\|A\|_{\op} \coloneqq \max_{x \in \R^q : \|x\|=1}{\|Ax\|}$ be the operator norm of $A$, and $A^{-1}$ be the inverse of $A$ (when it exists and $p = q$). For any collection of vectors $\{x_1,\dots,x_k\} \subseteq \R^p$, we let $\mathrm{span}(x_1,\dots,x_k)$ denote the linear span of the vectors. Moreover, for any matrix $A=[a_1,\dots,a_q] \in \R^{p \times q}$, where $a_1,\dots,a_q \in \R^p$ are the columns of $A$, we let $\mathrm{span}(A)$ be the linear span of the columns of $A$, i.e., the column space of $A$.  

For any sufficiently differentiable real-valued function $f : \Rbb^p \rightarrow \Rbb$, we let $\gr f : \Rbb^p \rightarrow \Rbb^p$ denote its gradient and $\gr^2 f : \Rbb^p \rightarrow \Rbb^{p \times p}$ denote its Hessian. Further, for any unit vector $u \in \Rbb^p$, we let $\partial_u f : \R^p \rightarrow \R$ denote the directional derivative of $f$ in the direction $u$, $\partial_u f(x) \coloneqq \langle\nabla f(x),u \rangle$. 

Throughout this work, $\exp(\cdot)$ and $\log(\cdot)$ denote the natural exponential and logarithm functions (with base $e$). For any positive integer $r$, we let $[r] \coloneqq \{1,\dots,r\}$. Finally, we use standard asymptotic notation, e.g., $o(\cdot)$, $\ccalO(\cdot)$, where the limiting variable should be clear from context.
%%%%%%%%%%%%%%%%%%%%%%%%%%%%%%%%%%%%%%%%%%%%%%%%%%%%%%%%%%%%

\section{Variants of low-rank gradient descent algorithm}\label{ap: adaptive lr}

In this appendix, we include the pseudocode for the iterated and adaptive low-rank gradient descent algorithms, which build on {\lrgd} to optimize general functions (that may not be low-rank).

\begin{algorithm}[h!]
\caption{Iterated Low-Rank Gradient Descent}\label{alg: subspace search}

\textbf{Require:} $r \geq 0$, $\epsilon>0$, $\alpha \in \big(0,\frac{1}{L}\big]$, $\mu > 0$, $\theta_0 \in \R^p$

initialize $\theta \gets \theta_0$

\While{$\Vert\nabla f(\theta)\Vert>\sqrt{2\mu\epsilon}$}{
    $\theta_1,...,\theta_r = \gd(\theta,r)$ 
    \hfill{run $r$ iterations of gradient descent}
    
    $u_1,...,u_r = \text{Gram-Schmidt}(\nabla f(\theta_1),...,\nabla f(\theta_r))$ 
    
    \While{$\Vert\hat\nabla f(\theta)\Vert \geq \sqrt{2\mu\epsilon}$}{
        
        $\hat{\nabla} f(\theta) \gets \sum_{k=1}^r \partial_{u_k}f(\theta)u_k$ 
        
        $\theta\gets\theta-\alpha \hat{\nabla} f(\theta) $
    }
}

\textbf{return} $\theta$

\end{algorithm}

\begin{algorithm}[H]
\caption{Adaptive Low-Rank Gradient Descent 
}\label{alg: adaptive LRGD}

\textbf{Require:} $\epsilon > 0$, $\alpha \in \big(0,\frac{1}{L}\big]$, $\mu > 0$, $\theta_0 \in \R^p$

$\theta \gets \theta_0$

$\ccalU \gets \big\{\frac{1}{\Vert\nabla f(\theta)\Vert}\nabla f(\theta)\big\}$

$r \gets 1$ 

\While{$r \leq p$}{

\While{$\Vert\hat\nabla f(\theta)\Vert \geq \sqrt{2\mu\epsilon}$}{

$\hat{\nabla} f(\theta) \gets \sum_{u \in \ccalU} \partial_{u}f(\theta) u$

$\theta\gets\theta-\alpha \hat{\nabla} f(\theta) $
}

$u' \gets \nabla f(\theta) - \sum_{u \in \ccalU} \partial_{u}f(\theta) u$

$\ccalU \gets \ccalU \cup \{u'\}$

$r \gets r+1$
}
\textbf{return} $\theta$

\end{algorithm}

%%%%%%%%%%%%%%%%%%%%%%%%%%%%%%%%%%%%%%%%%%%%%%%%%%%%%%%%%%%%

\section{Calculus of low-rank functions}

In this appendix, we describe the effect of standard algebraic operations, e.g., scalar multiplication, addition, pointwise product, and composition, on exact low-rank functions. We then close the section with a simple example of how one might monotonically transform functions to reduce their rank.

\subsection{Algebra of low-rank functions}
\label{Algebra of Low Rank Functions}

Recall that a differentiable function $f:\Theta \rightarrow \Rbb$ for $\Theta \subseteq \R^p$ is said to have rank $r \leq p$ if there exists a subspace $H \subseteq \R^p$ of dimension $r$ such that $\nabla f(\theta) \in H$ for all $\theta \in \Theta$ (and no subspace of smaller dimension satisfies this condition). The ensuing proposition presents how the rank is affected by various algebraic operations.

\begin{proposition}[Algebra of low-rank functions]
\label{prop: algebra of LR}
Let $f_1:\Theta \rightarrow \Rbb$ and $f_2:\Theta \rightarrow \Rbb$ be any two differentiable functions with ranks $r_1$ and $r_2$, respectively. Then, the following are true:
\begin{enumerate}[label=\arabic*),leftmargin=*,noitemsep,nolistsep]
\item For any constant $\alpha \in \R\backslash\{0\}$, the function $\alpha f_1 : \Theta \rightarrow \Rbb$, $(\alpha f_1)(\theta) = \alpha f_1(\theta)$ has rank $r_1$.
\item The function $f_1 + f_2 : \Theta \rightarrow \Rbb$, $(f_1 + f_2)(\theta) = f_1(\theta) + f_2(\theta)$ has rank at most $r_1 + r_2$.
\item The function $f_1 f_2 : \Theta \rightarrow \Rbb$, $(f_1 f_2)(\theta) = f_1(\theta) f_2(\theta)$ has rank at most $r_1 + r_2$.
\item For any differentiable function $g : \R \rightarrow \R$, the function $g \circ f_1 : \Theta \rightarrow \Rbb$, $(g \circ f_1)(\theta) = g(f_1(\theta))$ has rank at most $r_1$. For example, $\exp(f_1): \Theta \rightarrow \Rbb$, $\exp(f_1)(\theta) = \exp(f_1(\theta))$ has rank $r_1$.
\end{enumerate}
\end{proposition}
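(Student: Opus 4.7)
The plan is to compute $\nabla(\cdot)$ for each composite function via the standard calculus rules, then read off the containing subspace and invoke the minimality characterization in Definition \ref{definition: excat LR}. Let $H_1, H_2 \subseteq \Rbb^p$ denote the minimal subspaces associated with $f_1, f_2$, so that $\dim(H_1) = r_1$, $\dim(H_2) = r_2$, and $\nabla f_i(\theta) \in H_i$ for all $\theta \in \Theta$ and $i \in \{1,2\}$.

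For part 1), I would note that $\nabla(\alpha f_1)(\theta) = \alpha \nabla f_1(\theta) \in H_1$ since $H_1$ is a subspace, so the rank of $\alpha f_1$ is at most $r_1$. For the equality, the minimality assumption gives $\spn\{\nabla f_1(\theta) : \theta \in \Theta\} = H_1$, and because $\alpha \neq 0$, $\spn\{\alpha \nabla f_1(\theta) : \theta \in \Theta\}$ equals the same subspace; hence the rank is exactly $r_1$. For part 2), linearity of the gradient yields $\nabla(f_1 + f_2)(\theta) = \nabla f_1(\theta) + \nabla f_2(\theta) \in H_1 + H_2$, and $\dim(H_1 + H_2) \leq r_1 + r_2$. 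For part 3), the product rule gives
\begin{equation}
    \nabla(f_1 f_2)(\theta) \;=\; f_2(\theta) \nabla f_1(\theta) + f_1(\theta) \nabla f_2(\theta) \;\in\; H_1 + H_2,
\end{equation}
so again the rank is at most $r_1 + r_2$.

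For part 4), the chain rule gives $\nabla(g \circ f_1)(\theta) = g'(f_1(\theta)) \nabla f_1(\theta) \in H_1$, establishing rank at most $r_1$. To obtain the exact statement for $g = \exp$, I would use that $\exp(f_1(\theta)) > 0$ everywhere, so each gradient $\exp(f_1(\theta)) \nabla f_1(\theta)$ is a nonzero scalar multiple of $\nabla f_1(\theta)$; consequently $\spn\{\nabla \exp(f_1)(\theta) : \theta \in \Theta\} = \spn\{\nabla f_1(\theta) : \theta \in \Theta\} = H_1$, and the rank is precisely $r_1$. The argument is essentially routine once the gradient formulas are written out; the only mildly delicate point is the equality (not just upper bound) in parts 1) and in the $\exp$ case of 4), which requires invoking the minimality characterization of $H_1$ and the fact that positive (or nonzero) rescalings of a spanning set preserve the span.
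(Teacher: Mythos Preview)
Your proposal is correct and follows essentially the same approach as the paper's proof: compute the gradient of each composite via the standard calculus rules (scalar multiple, sum, product rule, chain rule), observe that the result lies in $H_1$ or $H_1+H_2$, and for the equality claims in parts 1) and the $\exp$ case of 4) use that nonzero scalar rescalings preserve the span. The paper's write-up is terser (it simply says ``the rank will not decrease'' where you spell out the spanning argument), but the underlying reasoning is identical.
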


\begin{proof}
These results are straightforward consequences of basic facts in linear algebra. Let $H_1$ and $H_2$ be subspaces of dimension $r_1$ and $r_2$, respectively, such that $\nabla f_1(\theta) \in H_1$ and $\nabla f_2(\theta) \in H_2$ for all $\theta \in \Theta$. Then, the aforementioned parts can be established as follows:
\begin{enumerate}[label=\arabic*),leftmargin=*,noitemsep,nolistsep]
\item For any constant $\alpha \in \R$, we have $\alpha \nabla f_1(\theta) \in H_1$ for all $\theta \in \Theta$. Moreover, since $\alpha \neq 0$, the rank will not decrease.
\item For all $\theta \in \Theta$, $\nabla f_1(\theta) + \nabla f_2(\theta) \in H_1 + H_2$, where $H_1 + H_2$ is the subspace sum of $H_1$ and $H_2$. Moreover, the dimension of $H_1 + H_2$ is at most $r_1 + r_2$.
\item For all $\theta \in \Theta$, $f_1(\theta) \nabla f_2(\theta) + f_2(\theta) \nabla f_1(\theta) \in H_1 + H_2$ using the product rule. As before, the dimension of $H_1 + H_2$ is at most $r_1 + r_2$.
\item For all $\theta \in \Theta$, $g^{\prime}(f_1(\theta)) \nabla f_1(\theta) \in H_1$. In the case of the exponential function, $g^{\prime}(f_1(\theta)) = \exp(f_1(\theta)) > 0$, which means that the rank will not decrease.
\end{enumerate}
This completes the proof.
\end{proof}

We remark that the first three properties demonstrate how rank behaves with the usual algebra of functions. Moreover, the above operations can be applied to simple ridge functions to construct more complex exact low-rank functions.

\subsection{Example: Transformation to a low-rank function}
\label{Example: Transformation to a Low Rank Function}

Consider the function $f:(0,\infty)^p \rightarrow \Rbb$ defined by
$$ f(\theta) = \prod_{i = 1}^{p}{\theta_i} \, , $$
where $\theta = (\theta_1,\dots,\theta_p)$. Objective functions of this form are often optimized in isoperimetric problems, e.g., when volume is extremized with constrained surface area. Observe that 
$$ \partial_j f(\theta) = \prod_{i\neq j}\theta_i \, , $$
where $\partial_j$ denotes the partial derivative with respect to the $j$th coordinate for $j \in [p]$. Consequently, for any $i \in [p]$, $\nabla f(\mathbf{1}-e_i) = e_i$, where $\mathbf{1} = (1,\dots,1)\in \Rbb^p$ and $e_i \in \Rbb^p$ is the $i$th standard basis vector. Therefore, the function $f$ has rank $p$ using Definition \ref{definition: excat LR}.

Now, using the change-of-variables $\phi_i = \log(\theta_i)$ for every $i \in [p]$, define the transformed function $g:\Rbb^p \rightarrow \R$ as 
$$ g(\phi) = \log(f(\theta)) = \sum_{i = 1}^{p}{\phi_i} = \langle \mathbf{1},\phi \rangle \, , $$
where $\phi = (\phi_1,\dots,\phi_p)$. The function $g$ has rank $1$ using Proposition \ref{prop: exact LR conditions}.

Since $\log(\cdot)$ is a monotonic transformation, minimizing (or maximizing) $f(\theta)$ is equivalent to minimizing (or maximizing) $g(\phi)$. However, $g$ has rank $1$ while $f$ has rank $p$. Hence, this example demonstrates that it is sometimes possible to transform optimization problems where the objective function has high-rank into equivalent problems where the objective function has low-rank. We note that this example is inspired by the classical transformation of a \emph{geometric program} into a convex program \cite[Section 4.5]{boyd2004convex}.

%%%%%%%%%%%%%%%%%%%%%%%%%%%%%%%%%%%%%%%%%%%%%%%%%%%%%%%%%%%%

\section{Proofs on approximate low-rank functions}

%%%%%%%%%%%%%%%%%%%%%%%%%%%%%%%%%%%%%%%%%%%
\subsection{Proof of Proposition \ref{prop: exact LR conditions}} \label{proof: prop exact LR}
%%%%%%%%%%%%%%%%%%%%%%%%%%%%%%%%%%%%%%%%%%%

$1)\Rightarrow 2)$: Let $\theta \in \Theta$ and $u\in H^\perp$. Consider $\gamma : t\in [0,1] \rightarrow f(\theta + tu)$. Note that its derivative satisfy $\gamma'(t) = \langle \nabla f(\theta + tu) , u \rangle =0$. Therefore, $\gamma(1) = \gamma (0)$, i.e. $f(\theta) = f(\theta+u)$.

$2) \Rightarrow 3)$: Take $e_1, \cdots, e_r$ an orthonormal basis of $H$, take $A = [e_1, \cdots , e_r]^{\top} \in \Rbb^{r\times p}$ and for all $x\in \Rbb^r$ $\sigma(x) = f(A^{\top}x)$. Then $A$ is a projection of $\Theta$ on $H$ and $\sigma$ is the restriction of $f$ to $H$.

$3) \Rightarrow 1)$: Assuming $f(\theta) = \sigma(A\theta)$, we get by the chain rule  \begin{equation}
    \nabla f(\theta) = A^{\top}\nabla \sigma(A\theta),
\end{equation}
as a consequence, $\nabla f(\theta) \in \text{span}(A)$.
\qed
%%%%%%%%%%%%%%%%%%%%%%%%%%%%%%%%%%%%%%%%%%%
\subsection{Proof of Proposition \ref{prop: apprx LR}} \label{proof: prop apprx LR}
%%%%%%%%%%%%%%%%%%%%%%%%%%%%%%%%%%%%%%%%%%%

We write Taylor's expansion around $\theta^*$,
\begin{equation}
\nabla f(\theta) = \nabla^2f(\theta^*) (\theta-\theta^*)+ o(\Vert\theta-\theta^*\Vert),
\end{equation}
where $\nabla^2f(\theta^*)$ is invertible, and the little-$o$ notation applies entry-wise. For any given subspace $H$, we can consider $u = \nabla^2f(\theta^*)^{-1} v$ where $v\in H^\perp$ and note that 
\begin{align}
    \nabla f(\theta^*+ tu) = tv + o(t),\\
    \Pi_H(\nabla f(\theta^*+ tu)) = o(t).
\end{align}
As a consequence, as $t\rightarrow 0$,
\begin{equation}
    \frac{\Vert\nabla f(\theta^*+tu) - \Pi_H(\nabla f(\theta^*+tu)) \Vert}{\Vert\nabla f(\theta^*+tu)\Vert} \rightarrow 1.
\end{equation}
\qed 

We remark that by looking at the rate of convergence in the equation above (which requires third derivatives), we could obtain a sharper lower bound for $\epsilon$.

%%%%%%%%%%%%%%%%%%%%%%%%%%%%%%%%%%%%%%%%%%%
\subsection{Proof of Proposition \ref{prop: gradients_span_directions}} \label{proof: prop gradients_span_directions}
%%%%%%%%%%%%%%%%%%%%%%%%%%%%%%%%%%%%%%%%%%%

The theorem is a consequence of the following lemma.

\begin{lemma}[Gradients on level sets]\label{lemma: all_directions}
For any $0<\lambda<\frac{\mu\Delta^2}{2}$ and any vector $u$ of unit norm, there exists $\theta$ s.t.
\begin{equation}
    f(\theta) = f(\theta^*)+\lambda \quad \text{and} \quad \frac{\nabla f(\theta)}{\Vert\nabla f(\theta)\Vert}=u, \quad \text{where} \quad \|\theta-\theta^*\|<\Delta.
\end{equation}
\end{lemma}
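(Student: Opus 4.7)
The plan is to realize the target direction $u$ as the outward normal to the level set $\{f = f(\theta^*) + \lambda\}$ at a well-chosen point, via constrained support-function maximization over the sublevel set.

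First I would define the sublevel set $K_\lambda := \{\theta \in \Rbb^p : f(\theta) \leq f(\theta^*) + \lambda\}$. Using strong convexity at $\theta^*$ together with $\nabla f(\theta^*) = 0$ (a consequence of $\theta^*$ being the minimizer), I get the quadratic lower bound $f(\theta) - f(\theta^*) \geq \tfrac{\mu}{2}\|\theta - \theta^*\|^2$, so every $\theta \in K_\lambda$ satisfies $\|\theta - \theta^*\| \leq \sqrt{2\lambda/\mu}$. Since $\lambda < \mu\Delta^2/2$, this gives $K_\lambda \subseteq \Bcal(\theta^*,\sqrt{2\lambda/\mu}) \subset \Bcal(\theta^*,\Delta)$ with strict inclusion. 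By continuity and convexity of $f$, the set $K_\lambda$ is closed and convex, hence compact and convex, and since $f(\theta^*) < f(\theta^*)+\lambda$ the point $\theta^*$ lies in its interior.

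Next I would solve $\max_{\theta \in K_\lambda} \langle u, \theta - \theta^* \rangle$. A maximizer $\theta_u$ exists by compactness. Because $u \neq 0$ and $\theta^*$ is interior, the maximum is strictly positive, so $\theta_u \neq \theta^*$; in particular the single inequality constraint must be active, i.e. $f(\theta_u) = f(\theta^*)+\lambda$. Since $\theta_u \neq \theta^*$ and $\theta^*$ is the unique critical point of the strongly convex $f$, we have $\nabla f(\theta_u) \neq 0$, which supplies the needed constraint qualification. Applying the first-order KKT conditions to this single-constraint smooth convex program then yields a multiplier $\nu \geq 0$ with $u = \nu \nabla f(\theta_u)$; the normalization $\|u\|=1$ forces $\nu>0$, so $\nabla f(\theta_u)/\|\nabla f(\theta_u)\| = u$. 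Collecting the three conclusions, the same $\theta_u$ satisfies $f(\theta_u) = f(\theta^*)+\lambda$, $\nabla f(\theta_u)/\|\nabla f(\theta_u)\| = u$, and $\|\theta_u - \theta^*\| \leq \sqrt{2\lambda/\mu} < \Delta$.

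I do not anticipate a substantive obstacle; the only subtle points are (i) keeping the inequality $\|\theta_u - \theta^*\| < \Delta$ strict, which is why the hypothesis $\lambda < \mu\Delta^2/2$ is stated strictly rather than with equality, and (ii) verifying the constraint qualification for KKT, which reduces to $\nabla f(\theta_u) \neq 0$ and is automatic from strong convexity. An alternative route would be to exploit that $\nabla f:\Rbb^p \to \Rbb^p$ is a bijection for strongly convex $f$, define $\theta(c)$ by $\nabla f(\theta(c)) = c u$, and pick $c>0$ so that $f(\theta(c)) - f(\theta^*) = \lambda$ via an intermediate value argument on the continuous, coercive map $c \mapsto f(\theta(c))$; however, the sublevel-set / support-function argument above is cleaner because the bound $\|\theta_u - \theta^*\| < \Delta$ falls out directly from strong convexity rather than requiring a separate estimate on $\|\theta(c) - \theta^*\|$.
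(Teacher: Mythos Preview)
Your proof is correct and follows essentially the same route as the paper: both define the sublevel set $K_\lambda$, use strong convexity to trap it strictly inside $\Bcal(\theta^*,\Delta)$, maximize the linear functional $\theta\mapsto\langle u,\theta\rangle$ over $K_\lambda$, and read off the gradient direction at the maximizer. The only cosmetic difference is the last step: you invoke the KKT conditions directly, while the paper derives the same conclusion by a short contradiction argument (assuming $\nabla f(\theta_u)/\|\nabla f(\theta_u)\|\neq u$ and perturbing along $w=u-v$ via a first-order Taylor expansion), which is effectively a hands-on proof of the KKT stationarity condition in this one-constraint setting.
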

\begin{proof}
Let $0<\lambda<\frac{\mu\Delta^2}{2}$ note that the set,
\begin{equation}
    \Lambda = \{\theta \in \Theta : f(\theta) \leq f(\theta^*)+\lambda\}
\end{equation}
is a closed, convex subset of $\Rbb^p$, and is strictly included in $\ccalB(\theta^*,\Delta)$ because by strong convexity,
\begin{equation}
    \frac{\mu}{2}\Vert\theta-\theta^*\Vert^2 \leq f(\theta)-f(\theta^*), \quad \forall \theta\in \Theta.
\end{equation}
For a given vector of unit norm $u\in \Rbb^p$, consider the function $g(\theta) = \langle \theta , u \rangle \in\Rbb$. This function can be maximized on $\Lambda$ and the maximum is attained on some $\theta_u\in \Lambda$. Note that $\theta_u$ belongs to the boundary of $\Lambda$, because it results from the optimization of a linear function on a convex set, as a consequence: $\theta_u\neq \theta^*$. Now define $v= \frac{\nabla f(\theta_u)}{\Vert\nabla f(\theta_u)\Vert}$, let's assume by contradiction that $v \neq u$ and define $w = u-v$. By Taylor's expansion,
\begin{align}
    f(\theta_u+tw) &= f(\theta_u)+t \langle \nabla f(\theta_u), w \rangle +o(t),\\
    g(\theta_u+tw) &= g(\theta_u) + t \langle u, w\rangle.
\end{align}
Note that by definition $\langle \nabla f(\theta_u), w \rangle < 0$ while $\langle u, w \rangle > 0$. As a consequence, the maximality of $\theta_u$ is contradicted. Indeed, there exists $t>0$ and $h=tu$ such that
\begin{equation}
    \theta_u+h\in \Lambda, \quad \text{and} \quad g(\theta_u+h)\geq g(\theta_u).
\end{equation}
\end{proof}

Using this lemma, we proceed to the proof of Proposition \ref{prop: gradients_span_directions}. Let $H$ be a vector space of dimension $r<p$ such that the low rank approximation $\hat{\nabla}f$ is defined as the projection on this vector space,
\begin{equation}
    \hat{\nabla}f(\theta)= \Pi_H(\nabla f(\theta))\quad \forall \theta\in \Theta.
\end{equation}
Take any direction $u\in H^\perp$, and apply Lemma \ref{lemma: all_directions} with $\lambda = \frac{\mu\Delta^2}{2}$ which gives $\theta_u\in\Theta$ satisfying
\begin{equation}
    f(\theta_u) = f(\theta^*)+\lambda \quad \text{and} \quad \frac{\nabla f(\theta_u)}{\Vert\nabla f(\theta_u)\Vert}=u.
\end{equation}
By orthogonality, $\Pi_H(\nabla f(\theta_u))=0$ therefore,
\begin{equation}
    \Vert\nabla f(\theta_u) - \Pi_H(\nabla f(\theta_u))\Vert = \Vert\nabla f(\theta_u)\Vert.
\end{equation}
Recall from \cite{boyd2004convex} that for all $\theta\in\Theta$,
\begin{equation*}
    2\mu(f(\theta)-f(\theta^*)) \leq \Vert\nabla f(\theta)\Vert^2.
\end{equation*}
In particular
\begin{align*}
    2\mu\lambda \leq \Vert\nabla f(\theta)\Vert^2,
    \quad \text{and} \quad
    \mu\Delta \leq \Vert\nabla f(\theta)\Vert.
\end{align*}
\qed

%%%%%%%%%%%%%%%%%%%%%%%%%%%%%%%%%%%%%%%%%%%%%%%%%%%%%%%%%%%%
\section{Proofs of oracle complexity analysis}

%%%%%%%%%%%%%%%%%%%%%%%%%%%%%%%%%%%%%%%%%%%%
\subsection{Strongly convex setting} 
%%%%%%%%%%%%%%%%%%%%%%%%%%%%%%%%%%%%%%%%%%%%

%%%%%%%%%%%%%%%%%%%%%%%%%%%%%%%%%%%%%%%%%%%%
\subsubsection{Proof of Theorem \ref{thm: exact LR SC}} \label{sec: exact LR SC proof}
%%%%%%%%%%%%%%%%%%%%%%%%%%%%%%%%%%%%%%%%%%%%

The proof is essentially based on the convergence of {\gd} for strongly convex loss functions. More precisely, when the objective function is exactly low-rank (Definition \ref{definition: excat LR}), then the iterates of {\lrgd} are indeed identical to those of {\gd}. However, {\lrgd} requires only $r$ oracle calls to directional derivatives of $f$ at each iteration. Therefore, we employ the well-known convergence of {\gd} for strongly convex objectives to find the total number of iterations required to reach an $\epsilon$-minimizer of the objective. However, as stated in the theorem's assumption, $f$ itself can not be strongly convex (as elaborated in Section \ref{sec: setup} as well). Rather, we assume that $f$ restricted to $H=\mathrm{span}(U)$ is $\mu$-strongly convex. Let us make this more clear by introducing a few notations. We denote by $F: \reals^r \to \reals$ the restriction of $f$ to $H$. That is,
%%%%
\begin{align} \label{eq: restriced F}
    F(\omega)
    \coloneqq
    f\big(U \omega \big),
    \quad
    \forall \omega \in \reals^r.
\end{align}
%%%%
Next, we relate the iterates of {\lrgd} on $f$ to the ones of {\gd} on $F$. Let us assume that {\lrgd} is initialized with $\theta_0 = 0$ (The proof can be easily generalized to arbitrary initialization). For $t=0,1,2,\cdots$ and stepsize $\alpha$, iterates of {\lrgd} are as follows
%%%%
\begin{align} 
    \theta_{t+1} 
    =
    \theta_t - \alpha \grhat f(\theta_t)
    =
    \theta_t - \alpha \gr f(\theta_t),
\end{align}
%%%%
which are identical to the iterates of {\gd} on $f$ due to the exact low-rankness assumption on $f$. Note that since $f$ is \emph{not} strongly convex, we may not utilize the linear convergence results on $f$. Nevertheless, we show in the following that such linear convergence still holds by using the restricted function $F$ defined above.

Let us define the {\gd} iterates $\omega_t$ on $F$ as follows
%%%%
\begin{align}
    \omega_0 = U^\top \theta_0,
    \quad
    \omega_{t+1} 
    =
    \omega_t - \alpha \gr F(\omega_t),
    \quad
    t=0,1,2,\cdots .
\end{align}
%%%%
Next, we show that the iterates $\{\theta_t\}$ and $\{\omega_t\}$ are related as $\omega_t = U^\top \theta_t$ for any $t=0,1,2,\cdots$. By definition, $\omega_0 = U^\top \theta_0$. Assume that for any $0 \leq k \leq t$ we have that $\omega_k = U^\top \theta_k$. We can write
%%%%
\begin{align}
    \omega_{t+1} 
    =
    \omega_t - \alpha \gr F(\omega_t)
    =
    U^\top \theta_t - \alpha U^\top \gr f(U \omega_t)
    =
    U^\top \left(\theta_t - \alpha \gr f(\theta_t) \right)
    =
    U^\top \theta_{t+1}.
\end{align}
%%%%
In above, we used the fact that $U \omega_t = \theta_t$ for any $t$. To verify this, note that $U \omega_t = U U^\top \theta_t$. On the other hand, one can verify that all the iterates $\{\theta_t\}$ live in $H$. Therefore, the projection of $\theta_t$ to $H$ equals to itself, i.e., $U \omega_t = U U^\top \theta_t = \theta_t$.

Next, we show that the minimum value of the two objectives $f$ and $F$ are indeed identical. Intuitively, $F$ is the restriction of $f$ to $H$ and we know that the function value of $f$ does not change along the directions of $H^\perp$. Therefore, $f$ and $F$ attain the same minimum function values. To show this concretely, first note that $F^* \coloneqq \min_{\omega \in \reals^r} F(\omega) \geq f^* \coloneqq \min_{\theta \in \reals^p} f(\theta)$. Let $\theta^*$ be a minimizer of $f$, i.e., $f(\theta^*) = f^*$. Next, we have that
%%%%
\begin{align}
    F(U^\top \theta^*)
    =
    f(U U^\top \theta^*)
    =
    f(\theta^* - U_{\perp} U_{\perp}^\top \theta^*)
    =
    f(\theta^*)
    =
    f^*,
\end{align}
%%%%
which yields that $F^* = f^*$. In above, $U_{\perp} U_{\perp}^\top$ denotes the projection to $H^{\perp}$ and we used the fact that $U_{\perp} U_{\perp}^\top \theta^* \in H^{\perp}$ and from Proposition \ref{prop: exact LR conditions}, $f(\theta^* - U_{\perp} U_{\perp}^\top \theta^*) = f(\theta^*)$.

Now, we are ready to derive the convergence of $\{\theta_t\}$ on $f$. Since $F$ is $\mu$-strongly convex and $L$-smooth, the following contraction holds for {\gd} iterates $\{\omega_t\}$ on $F$ for stepsize $\alpha=1/L$
%%%%
\begin{align} 
    F(\omega_t) - F^*
    \leq
    \exp(-t/\kappa) \left( F(\omega_0) - F^*\right).
\end{align}
%%%%
Substituting $F^* = f^*$ and $F(\omega_t) = f(\theta_t)$ in above, we have that
%%%%
\begin{align} 
    f(\theta_t) - f^*
    \leq
    \exp(-t/\kappa) \left( f(\theta_0) - f^*\right).
\end{align}
%%%%
Therefore, {\lrgd} takes $T = \kappa \log(\Delta_0/\epsilon)$ iterations to reach an $\epsilon$-minimizer of $f$ where $\Delta_0 = f(\theta_0) - f^*$. Each iteration of {\lrgd} requires $r$ calls to directional derivatives of $f$. Together with the oracle complexity of the initial $r$ full gradient computations (hence $pr$ directional derivatives), the total oracle complexity sums up to be
%%%%
\begin{align}
\ccalC_{{\normalfont{\lrgd}}}
=
T r + p r
=
\kappa r \log(\Delta_0/\epsilon) + p r.
\end{align}
%%%%

In above, we used the well-known convergence result for strongly convex functions.
For the reader's convenience, we reproduce this convergence rate here as well. The next lemma characterizes the convergence of {\gd} for strongly convex objectives \cite{Nesterov2004}.

\begin{lemma}[Convergence in strongly convex case \cite{Nesterov2004}] \label{lemma: GD SC convergence}
Let $F$ be $L$-smooth and $\mu$-strongly convex according to Assumption \ref{assumption: smooth} and \ref{assumption: SC} with $\kappa = L/\mu$ denote the condition number. Then, for stepsize $\alpha=1/L$, the iterates of {\normalfont{\gd}} converge to the unique global minimizer as follows
%%%%
\begin{align}
F(\theta_t) - F^*
\leq
\exp (-t/\kappa ) \left(F(\theta_0) - F^*\right).
\end{align}
%%%%
\end{lemma}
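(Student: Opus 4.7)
The plan is to establish the standard one-step geometric contraction $F(\theta_{t+1}) - F^{\star} \leq (1 - 1/\kappa)(F(\theta_t) - F^{\star})$ for the gradient descent update $\theta_{t+1} = \theta_t - \alpha \nabla F(\theta_t)$ with $\alpha = 1/L$, and then iterate this $t$ times, finishing via the elementary inequality $1 - x \leq \exp(-x)$ valid for all $x \in \mathbb{R}$.

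First I would invoke the $L$-smoothness bound from Assumption \ref{assumption: smooth} applied to the pair $(\theta_t, \theta_{t+1})$, which yields
\begin{equation*}
F(\theta_{t+1}) \leq F(\theta_t) + \langle \nabla F(\theta_t), \theta_{t+1} - \theta_t \rangle + \tfrac{L}{2}\|\theta_{t+1} - \theta_t\|^2.
\end{equation*}
Substituting the update rule with $\alpha = 1/L$ collapses the right-hand side to $F(\theta_t) - \frac{1}{2L}\|\nabla F(\theta_t)\|^2$. This is the classical descent lemma, and it is the only place where $L$-smoothness enters.

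Next I would extract the Polyak--Łojasiewicz inequality $\|\nabla F(\theta)\|^2 \geq 2\mu (F(\theta) - F^{\star})$ from $\mu$-strong convexity (Assumption \ref{assumption: SC}). This is the step highlighted earlier in the paper just after Assumption \ref{assumption: SC}, and it is obtained by minimizing the strong-convexity lower bound $F(\theta') \geq F(\theta) + \langle \nabla F(\theta), \theta' - \theta \rangle + \frac{\mu}{2}\|\theta'-\theta\|^2$ over $\theta'$ (the minimizer of the quadratic on the right is $\theta - \nabla F(\theta)/\mu$, which gives $F^{\star} \geq F(\theta) - \|\nabla F(\theta)\|^2/(2\mu)$). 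Plugging this into the descent lemma gives
\begin{equation*}
F(\theta_{t+1}) - F^{\star} \leq \big(1 - \tfrac{\mu}{L}\big)\big(F(\theta_t) - F^{\star}\big) = \big(1 - \tfrac{1}{\kappa}\big)\big(F(\theta_t) - F^{\star}\big).
\end{equation*}

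Iterating this contraction $t$ times yields $F(\theta_t) - F^{\star} \leq (1 - 1/\kappa)^t (F(\theta_0) - F^{\star})$, and applying $1 - 1/\kappa \leq \exp(-1/\kappa)$ gives the claimed bound $F(\theta_t) - F^{\star} \leq \exp(-t/\kappa)(F(\theta_0) - F^{\star})$. There is no genuine obstacle here: this is a textbook result (cf.\ \cite{Nesterov2004}) and every ingredient is a direct consequence of the two posited assumptions, so the proof reduces to two one-line manipulations plus a geometric iteration. The only mild subtlety worth noting is that uniqueness of the global minimizer -- implicit in the statement through the notation $F^{\star}$ -- follows immediately from strict convexity, and is not actually used in the derivation of the rate.
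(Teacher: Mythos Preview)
Your proof is correct and is precisely the standard descent-lemma-plus-PL argument one finds in textbooks such as \cite{Nesterov2004}. The paper itself does not actually prove this lemma; it merely states it with attribution to \cite{Nesterov2004}, so there is no alternative approach to compare against.
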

%%%%
%%%%
\qed

%%%%%%%%%%%%%%%%%%%%%%%%%%%%%%%%%%%%%%%%%%%%
\subsubsection{Proof of Theorem \ref{thm: apprx LR SC}} \label{sec: apprx LR SC proof}
%%%%%%%%%%%%%%%%%%%%%%%%%%%%%%%%%%%%%%%%%%%%

The proof consists of two main parts. First, we characterize the gradient inexactness induced by the gradient approximation step of {\lrgd} which will be used in the second part to derive convergence rates for inexact gradient descent type methods.
%%%%
\begin{lemma}[Gradient approximation] \label{lemma: LRGD gr apprx error}
Assume that the $f$ is $(\eta,\epsilon)$-approximately rank-$r$ according to Definition \ref{definition: apprx LR} and the following condition holds 
%%%%
\begin{align} \label{eq: sigma_r condition}
\frac{2r}{\sigma_r} \left(\eta + \frac{\epsilon}{\epsilon'} \right)
<
\frac{1}{\sqrt{10}},
\end{align}
%%%%
where $\sigma_r$ is the smallest singular value of the matrix $G = [g_1/{\Vert g_1\Vert},\cdots,g_r/{\Vert g_r\Vert}]$ with $g_j \coloneqq \gr f(\theta^j)$ and $\Vert g_j \Vert > \epsilon'$ for all $j \in [r]$. Then, the gradient approximation error of {\normalfont{\lrgd}} is bounded as follows
%%%%
\begin{align}
    \Vert \hat{\nabla}f(\theta) - \nabla f(\theta) \Vert 
    \leq
    \tilde{\eta} \Vert \nabla  f(\theta) \Vert + \epsilon, 
\end{align}
%%%%
for $\tilde{\eta} \coloneqq  \eta + {2r(\eta + \frac{\epsilon}{\epsilon'})}/{\sigma_r}$.
%%%%
\end{lemma}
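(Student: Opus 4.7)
The plan is to bound the approximation error by exhibiting a convenient surrogate in the subspace $\hat{H} := \mathrm{span}(G)$ and invoking the optimality of the orthogonal projection $\hat{\nabla} f(\theta) = \Pi_{\hat H}(\nabla f(\theta))$. Writing $g = \nabla f(\theta)$ and using that $\|g - \Pi_{\hat H}(g)\| \leq \|g - v\|$ for every $v \in \hat H$, a triangle inequality together with the $(\eta,\epsilon)$-approximate low-rank property yields
\begin{equation*}
\|\hat{\nabla} f(\theta) - g\| \;\leq\; \|g - \Pi_H(g)\| + \|\Pi_H(g) - v\| \;\leq\; \eta\|g\| + \epsilon + \|\Pi_H(g) - v\|.
\end{equation*}
So it suffices to exhibit a single $v \in \hat H$ for which $\|\Pi_H(g) - v\|$ is of order $(\eta + \epsilon/\epsilon')\,\|g\|/\sigma_r$, since then combining with $\eta\|g\| + \epsilon$ produces exactly a bound of the form $\tilde\eta\|g\| + \epsilon$.

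To construct such a $v$, I would parametrize $v = G\alpha$ with $\alpha \in \Rbb^r$ and choose $\alpha$ so that $\Pi_H(G)\alpha = \Pi_H(g)$. With this choice, $\Pi_H(g) - v = -\Pi_{H^\perp}(G)\alpha$, hence $\|\Pi_H(g) - v\| \leq \|\Pi_{H^\perp}(G)\|_{\mathrm{op}}\,\|\alpha\|$. Each normalized column $\bar g_j = g_j/\|g_j\|$ satisfies $\|\Pi_{H^\perp}\bar g_j\| \leq \eta + \epsilon/\|g_j\| \leq \eta + \epsilon/\epsilon'$ by the approximate low-rank inequality applied at $\theta^j$ together with the assumption $\|g_j\| > \epsilon'$, and a column-wise bound then gives $\|\Pi_{H^\perp} G\|_{\mathrm{op}} \leq r(\eta + \epsilon/\epsilon')$.

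The remaining piece is to control $\|\alpha\|$, for which I would invoke Weyl's inequality on $G = \Pi_H G + \Pi_{H^\perp} G$: $\sigma_r(\Pi_H G) \geq \sigma_r(G) - \|\Pi_{H^\perp}G\|_{\mathrm{op}} \geq \sigma_r - r(\eta+\epsilon/\epsilon')$. The hypothesis \eqref{eq: sigma_r condition} implies this right-hand side is at least $\sigma_r/2$, so $\Pi_H G$ has full column rank $r$, the equation $\Pi_H(G)\alpha = \Pi_H(g)$ is solvable, and its minimum-norm solution obeys $\|\alpha\| \leq 2\|\Pi_H(g)\|/\sigma_r \leq 2\|g\|/\sigma_r$. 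Combining the pieces gives $\|\Pi_H(g) - v\| \leq (2r/\sigma_r)(\eta + \epsilon/\epsilon')\|g\|$, and substituting back produces the stated bound with $\tilde\eta = \eta + 2r(\eta + \epsilon/\epsilon')/\sigma_r$. The main obstacle is precisely this quantitative perturbation step: ensuring that $\Pi_H G$ inherits enough of the conditioning of $G$ despite the columns of $G$ only approximately lying in $H$, which is exactly the role of the hypothesis \eqref{eq: sigma_r condition}. A secondary subtlety is that the column-wise bound $\|\Pi_{H^\perp}G\|_{\mathrm{op}} \leq r(\eta+\epsilon/\epsilon')$ is slightly loose (a Frobenius estimate would give $\sqrt r$), but the factor $r$ matches the form of the hypothesis and suffices to produce the stated constant.
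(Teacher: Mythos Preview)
Your argument is correct and yields exactly the stated constant $\tilde\eta$, but it proceeds by a different mechanism than the paper. The paper compares the two subspaces directly: it forms $G_H = \Pi_H G$, bounds $\|G - G_H\|_{\op} \leq r(\eta+\epsilon/\epsilon')$, applies Weyl's inequality to certify that $G_H$ has full column rank (so that $\mathrm{span}(U_H) = H$), and then invokes Wedin's $\sin\Theta$ theorem to obtain $\|\Pi_U - \Pi_H\|_{\op} \leq (2r/\sigma_r)(\eta+\epsilon/\epsilon')$; the final bound comes from the triangle inequality $\|\Pi_U g - g\| \leq \|\Pi_H g - g\| + \|\Pi_U - \Pi_H\|_{\op}\|g\|$. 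Your route sidesteps Wedin entirely: you use the best-approximation property of $\Pi_{\hat H}$ and construct an explicit competitor $v = G\alpha$ by solving $\Pi_H G\,\alpha = \Pi_H g$, then control $\|\Pi_H g - v\| = \|\Pi_{H^\perp}G\,\alpha\|$ through the same column-wise and Weyl estimates. The paper's approach has the advantage of producing a reusable subspace-perturbation bound $\|\Pi_U - \Pi_H\|_{\op}$, while yours is more elementary and self-contained, requiring no black-box spectral perturbation theorem beyond Weyl's inequality.
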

%%%%
\begin{proof}
Let $G = U \Sigma V^\top$ and $G_H = U_H \Sigma_H V_H^{\top}$ respectively denote SVDs for matrices $G$ and $G_H$, where
%%%%
\begin{gather}
G = \begin{bmatrix}
\frac{1}{\Vert g_1\Vert}g_1, \dots, \frac{1}{\Vert g_r\Vert}g_r
\end{bmatrix}, \quad
G_H = \begin{bmatrix}
\frac{1}{\Vert g_1\Vert}\Pi_H(g_1), \dots, \frac{1}{\Vert g_r\Vert}\Pi_H(g_r)
\end{bmatrix}.
\end{gather}
%%%%
Since $f$ is $(\eta,\epsilon)$-approximately rank-$r$ according to Definition \ref{definition: apprx LR}, we can bound the operator norm of $G - G_H$ as follows
%%%%
\begin{align} \label{eq: G - G_H bound}
    \Vert G - G_H\Vert_{\op} 
    \leq
    \sqrt{r}\Vert G - G_H\Vert_{\mathrm{F}}
    \leq
    r\max_{j \in [r]}\frac{1}{\Vert g_j \Vert} \Vert g_j - \Pi_H(g_j) \Vert
    \leq
    r \left(\eta + \frac{\epsilon}{\epsilon'}\right),
\end{align}
%%%%
where in the last inequality we used the assumption that $\Vert g_j \Vert > \epsilon'$ for all $j \in [r]$. The condition stated in the lemma ensures that $G$ is full-rank (since $\sigma_r > 0$). Next, we show that projecting columns of $G$ on $H$ does not degenerate its rank, that is, $G_H$ is full-rank as well. To do see, we employ Weyl's inequality \cite[Corollary
7.3.5(a)]{HornJohnson2013} to write that
%%%%
\begin{align}
    \sigma_r(G_H)
    \geq
    \sigma_r - \sigma_1(G - G_H)
    \geq
    \sigma_r - \Vert G - G_H\Vert_{\op} 
    \overset{(a)}{\geq}
    \sigma_r - r \left(\eta + \frac{\epsilon}{\epsilon'}\right)
    \overset{(b)}{>}
    0.
\end{align}
%%%%
In above, $\sigma_r(G_H)$ and $\sigma_1(G - G_H)$ respectively denote the smallest and largest singular values of $G_H$ and $G- G_H$, $(a)$ follows from the bound in \eqref{eq: G - G_H bound}, and $(b)$ is an immediate result of the condition \eqref{eq: sigma_r condition}. Since $H$ is a subspace of rank $r$ and columns of $G_H$ are all in $H$, we have that $H = \mathrm{span}(U_H)$. Next, we employ Wedin's $\sin\Theta$ theorem \cite{chen2020spectral} to conclude that
%%%%
\begin{align}
    \text{Sin}\Theta(\mathrm{span}(U),\mathrm{span}(U_H)) = \Vert \Pi_U - \Pi_{H}\Vert_{\op} 
    \leq
    \frac{2}{\sigma_r} \Vert G - G^*\Vert_{\op} 
    \leq
    \frac{2r}{\sigma_r} \left( \eta + \frac{\epsilon}{\epsilon'} \right).
\end{align}
%%%%
This yields that for any $ \theta\in \reals^p$ we can uniformly bound the {\lrgd} gradient approximation error as follows
%%%%
\begin{align}
    \Vert \hat{\nabla}f(\theta) - \nabla f(\theta) \Vert 
    &=
    \Vert \Pi_U(\nabla f(\theta)) - \nabla f(\theta)\Vert  \\
    &\leq
    \Vert \Pi_H(\nabla f(\theta)) - \nabla f(\theta) \Vert 
    +
    \Vert \Pi_H(\nabla f(\theta)) - \Pi_U(\nabla f(\theta)) \Vert \\
    &\leq
    \tilde{\eta} \Vert \nabla  f(\theta) \Vert + \epsilon, 
\end{align}
%%%%
for $\tilde{\eta} \coloneqq  \eta + {2r(\eta + \frac{\epsilon}{\epsilon'})}/{\sigma_r}$.
\end{proof}
%%%%

Next, we characterize convergence of inexact gradient descent and use {\lrgd}'s gradient approximation error bound derived in Lemma \ref{lemma: LRGD gr apprx error} to complete the proof. 

Let us get back to the setting of Theorem \ref{thm: apprx LR SC}. From smoothness of $f$ in Assumption \ref{assumption: smooth} and the update rule of {\lrgd}, i.e. $\theta_{t+1} = \theta_t - \alpha \grhat f(\theta_t)$, we can write
%%%%
\begin{align} 
    f(\theta_{t+1})
    &=
    f(\theta_t - \alpha \grhat f(\theta_t)) \\
    &\leq
    f(\theta_t)
    -
    \alpha \langle \gr f(\theta_t), \grhat f(\theta_t) \rangle
    +
    \frac{L}{2} \alpha^2 \Vert \grhat f(\theta_t) \Vert^2\\
    &=
    f(\theta_t)
    -
    \alpha \Vert \gr f(\theta_t) \Vert^2
    -
    \alpha \langle \gr f(\theta_t), e_t \rangle
    +
    \frac{L}{2} \alpha^2 \Vert \grhat f(\theta_t) \Vert^2, \label{eq: LRGD contraction 1}
\end{align}
%%%%
where $e_t \coloneqq \grhat f(\theta_t) - \gr f(\theta_t)$ denotes the gradient approximation error at iteration $t$. Next, we employ the gradient approximation error bound derived in Lemma \ref{lemma: LRGD gr apprx error}, that is $\Vert e_t \Vert \leq \tilde{\eta} \Vert \nabla  f(\theta_t) \Vert + \tilde{\epsilon}$, where we denote $\tilde{\epsilon} \coloneqq \sqrt{\mu \epsilon/5}$ and $\tilde{\eta} \coloneqq  \eta + {2r(\eta + \frac{\tilde{\epsilon}}{\epsilon'})}/{\sigma_r}$ for notation simplicity. Together with simple algebra from \eqref{eq: LRGD contraction 1}, we have that
%%%%
\begin{align} 
    f(\theta_{t+1})
    &\leq
    f(\theta_t)
    -
    \alpha \Vert \gr f(\theta_t) \Vert^2
    +
    \frac{\alpha}2{}\Vert \gr f(\theta_t) \Vert^2
    +
    \frac{\alpha}2{}\Vert e_t \Vert^2
    +
    \alpha^2 L \Vert \gr f(\theta_t) \Vert^2
    +
    \alpha^2 L \Vert e_t \Vert^2 \\
    &\leq
    f(\theta_t)
    -
    \frac{\alpha}{2} \left(1 - 2 \alpha L - 2 \tilde{\eta}^2 (1 + 2 \alpha L) \right) \Vert \gr f(\theta_t) \Vert^2
    +
    \alpha (1 + 2 \alpha L) \tilde{\epsilon}^2. \label{eq: LRGD contraction 2}
'r\end{align}
%%%%
Next, we use the gradient dominant property of strongly convex loss functions, that is $\Vert \gr f(\theta) \Vert^2 \geq 2 \mu (f(\theta) - f^*)$, $\forall \theta$, which together with \eqref{eq: LRGD contraction 2} implies that
%%%%
\begin{align} 
    f(\theta_{t+1}) - f^*
    &\leq
    \left( 1 - \tilde{\alpha} \mu  \right)
    \left(f(\theta_t) - f^* \right)
    +
    \alpha (1 + 2 \alpha L) \tilde{\epsilon}^2,
\end{align}
%%%%
for the shorthand notation $\tilde{\alpha} \coloneqq \alpha \left(1 - 2 \alpha L - 2 \tilde{\eta}^2 (1 + 2 \alpha L) \right)$. Now we pick the stepsize $\alpha = \frac{1}{8L}$ and use the condition $\tilde{\eta} \leq 1/\sqrt{10}$ to conclude that
%%%%
\begin{align} \label{eq: LRGD contraction 3}
    f(\theta_{t+1}) - f^*
    &\leq
    \left( 1 - \frac{1}{16 \kappa}  \right)
    \left(f(\theta_t) - f^* \right)
    +
    \frac{5}{32L} \tilde{\epsilon}^2. 
\end{align}
%%%%
The contraction bound in \eqref{eq: LRGD contraction 3} implies that after $T$ iterations of {\lrgd} the final suboptimality is bounded as below
%%%%
\begin{align} 
    f(\theta_{T}) - f^*
    &\leq
    \exp\left( - \frac{T}{16 \kappa}  \right)
    \left(f(\theta_0) - f^* \right)
    +
    \frac{\epsilon}{2}.
\end{align}
%%%%
Finally, in order to reach an $\epsilon$-minimizer of $f$, it suffices to satisfy the following condition
%%%%
\begin{align} 
    \exp\left( - \frac{T}{16 \kappa}  \right)
    \left(f(\theta_0) - f^* \right)
    \leq
    \frac{\epsilon}{2},
\end{align}
%%%%
yielding that {\lrgd} runs for $T = 16 \kappa \log(2 \Delta_0 / \epsilon)$ iterations. Thus, the total oracle complexity of {\lrgd} would be
%%%%
\begin{align}
\ccalC_{{\normalfont{\lrgd}}}
=
T r + p r
=
16 \kappa r \log(2 \Delta_0 / \epsilon) + pr,
\end{align}
%%%%
as claimed in Theorem \ref{thm: apprx LR SC}.
\qed

%%%%%%%%%%%%%%%%%%%%%%%%%%%%%%%%%%%%%%%%%%%%
\subsection{Non-convex setting} 
%%%%%%%%%%%%%%%%%%%%%%%%%%%%%%%%%%%%%%%%%%%%

%%%%%%%%%%%%%%%%%%%%%%%%%%%%%%%%%%%%%%%%%%%%
\subsubsection{Proof of Theorem \ref{thm: exact LR NC}} \label{sec: exact LR NC proof}
%%%%%%%%%%%%%%%%%%%%%%%%%%%%%%%%%%%%%%%%%%%%

We first note that when $f$ is exactly rank-$r$, then the iterates of {\lrgd} are identical to (exact) {\gd}'s ones. Therefore, the number of iterations required by {\lrgd} to reach an $\epsilon$-stationary point is determined by the convergence rate of {\gd} for non-convex losses. Though this is well-known in the optimization community, it is rarely stated explicitly due to its simplicity. So, in the following, we provide such a convergence result for the readers' convenience.
%%%%
\begin{lemma}[Convergence in non-convex case] \label{lemma: GD NC convergence}
Let $f$ be $L$-smooth 
according to Assumption \ref{assumption: smooth}. Moreover, assume that $f$ is exactly rank-$r$ according to Definition \ref{definition: apprx LR}. Then, for stepsize $\alpha=1/L$ and after $T$ iterations of {\lrgd} in Algorithm \ref{alg: LRGD}, there exists an iterate $0 \leq t \leq T-1$ for which
%%%%
\begin{align} 
    \big\Vert \gr f(\theta_t) \big\Vert^2 
    \leq
    \frac{2L}{T} \left(f(\theta_0) - f^* \right).
\end{align}
%%%%
\end{lemma}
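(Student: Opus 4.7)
The plan is to reduce the lemma to the textbook descent inequality for {\gd} on smooth functions, after first observing that on exactly rank-$r$ objectives the iterates produced by {\lrgd} coincide with those of vanilla {\gd}. Indeed, since $\nabla f(\theta) \in H$ for every $\theta$ by Definition \ref{definition: excat LR}, and the active subspace $H$ recovered in Algorithm \ref{alg: LRGD} from full-rank gradient samples equals $\spn(\nabla f)$, the projection $\Pi_H$ acts as the identity on gradients. Therefore $\grhat f(\theta_t) = \gr f(\theta_t)$ and the {\lrgd} update reduces to $\theta_{t+1} = \theta_t - \alpha \gr f(\theta_t)$.

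Next I would invoke $L$-smoothness (Assumption \ref{assumption: smooth}) on the pair $(\theta_t, \theta_{t+1})$ to get the standard descent inequality
\begin{equation}
f(\theta_{t+1}) \leq f(\theta_t) - \alpha \Vert \gr f(\theta_t)\Vert^2 + \frac{L \alpha^2}{2}\Vert \gr f(\theta_t)\Vert^2 .
\end{equation}
Substituting $\alpha = 1/L$ collapses the right-hand side to $f(\theta_t) - \frac{1}{2L}\Vert \gr f(\theta_t)\Vert^2$, which rearranges to $\Vert \gr f(\theta_t)\Vert^2 \leq 2L\bigl(f(\theta_t) - f(\theta_{t+1})\bigr)$.

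Finally, I would telescope this bound over $t = 0, 1, \dots, T-1$, using $f(\theta_T) \geq f^*$ to control the right-hand side by $2L(f(\theta_0) - f^*)$. The claim follows by noting that the minimum of $T$ nonnegative quantities is at most their average:
\begin{equation}
\min_{0 \leq t \leq T-1}\Vert \gr f(\theta_t)\Vert^2 \leq \frac{1}{T}\sum_{t=0}^{T-1}\Vert \gr f(\theta_t)\Vert^2 \leq \frac{2L}{T}\bigl(f(\theta_0) - f^*\bigr),
\end{equation}
which produces the desired iterate $t \in \{0, \dots, T-1\}$. There is no real obstacle here; the only non-routine step is the identification $\grhat f = \gr f$ under exact low-rankness, which is what makes the standard {\gd} analysis directly applicable to {\lrgd}.
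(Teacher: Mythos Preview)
Your proposal is correct and follows essentially the same approach as the paper: identify $\grhat f = \gr f$ under exact low-rankness so that {\lrgd} reduces to {\gd}, apply the smoothness descent inequality with $\alpha = 1/L$, and telescope to bound the minimum gradient norm by the average. The paper's proof is identical in structure, differing only in that it leaves the justification of $\grhat f = \gr f$ more implicit than you do.
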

%%%%
\begin{proof}
%%%%
Given the initialization $\theta_0$ and stepsize $\alpha$, the iterates of {\lrgd} (or {\gd}) can be written as $\theta_{t+1} = \theta_t - \alpha \gr f(\theta)$ for any $t=0,1,2,\cdots$. Since $f$ is $L$-smooth, for stepsize $\alpha=1/L$ we have that
%%%%
\begin{align} \label{eq: GD NC iterates}
    f(\theta_{t+1})
    &=
    f(\theta_t - \alpha \gr f(\theta_t)) \\
    &\leq
    f(\theta_t)
    -
    \alpha \langle \gr f(\theta_t), \gr f(\theta_t) \rangle
    +
    \frac{L}{2} \alpha^2 \Vert \gr f(\theta_t) \Vert^2\\
    &=
    f(\theta_t)
    -
    \frac{1}{2L} \Vert \gr f(\theta_t) \Vert^2.
\end{align}
%%%%
Summing \eqref{eq: GD NC iterates} for all $t=0,1,\cdots,T-1$ yields that
%%%%
\begin{align} 
    \min_{0 \leq t \leq T-1} \big\Vert \gr f(\theta_t) \big\Vert^2 
    \leq
    \frac{1}{T} \sum_{t=0}^{T-1} \Vert \gr f(\theta_t) \Vert^2
    \leq
    \frac{2L}{T} (f(\theta_0 - f^*),
\end{align}
%%%%
concluding the claim.
\end{proof}
%%%%

Getting back to Theorem \ref{thm: exact LR NC} and using Lemma \ref{lemma: GD NC convergence}, it is straightforward to see that {\lrgd} requires $T = 2L \Delta_0/\epsilon$ iterations to reach an $\epsilon$-stationary point of $f$ which yields the total oracle complexity of {\lrgd} to be
%%%%
\begin{align}
\ccalC_{{\normalfont{\lrgd}}}
=
T r + p r
=
\frac{2 r L \Delta_0}{\epsilon^2} + p r,
\end{align}
%%%%
where $pr$ is indeed the (directional) gradient computation cost associated with the $r$ gradient vectors $g_1, \cdots, g_r$ require to construct thee matrix $G$ in {\lrgd} (Algorithm \ref{alg: LRGD}).
\qed

%%%%%%%%%%%%%%%%%%%%%%%%%%%%%%%%%%%%%%%%%%%%
\subsubsection{Proof of Theorem \ref{thm: apprx LR NC}} \label{sec: apprx LR NC proof}
%%%%%%%%%%%%%%%%%%%%%%%%%%%%%%%%%%%%%%%%%%%%

We first invoke Lemma \ref{lemma: LRGD gr apprx error} which bounds the gradient approximation error of {\lrgd}. Note that this lemma does not require any convexity assumptions, hence we use it in the proof of both strongly convex (Theorem \ref{thm: apprx LR SC}) and non-convex settings (Theorem \ref{thm: apprx LR NC}).

First, note that from Lemma \ref{lemma: LRGD gr apprx error} and given the condition stated in Theorem \ref{thm: apprx LR NC}, the gradient approximation error (inexactness) is upper-bounded as $\Vert e_t \Vert \leq \tilde{\eta} \Vert \nabla  f(\theta_t) \Vert + \tilde{\epsilon}$ for $\tilde{\eta} \coloneqq  \eta + {2r(\eta + \frac{\tilde{\epsilon}}{\epsilon'})}/{\sigma_r}$ and $\tilde{\epsilon} \coloneqq  \epsilon/3$. Secondly, we characterize the convergence of inexact {\gd} (which {\lrgd} updates can be viewed as such) to derive {\lrgd}'s oracle complexity to reach an stationary point.

Let us borrow the contraction bound \eqref{eq: LRGD contraction 2} from Theorem \ref{thm: apprx LR SC}. Note that we used only the smoothness of $f$ (and not the strong convexity) to get this bound, hence, we can safely employ it here for the smooth and nonconvex objectives. That is, for stepsize $\alpha$, the iterates of {\lrgd} satisfy the fallowing contraction
%%%%
\begin{align} 
    f(\theta_{t+1})
    &\leq
    f(\theta_t)
    -
    \frac{\alpha}{2} \left(1 - 2 \alpha L - 2 \tilde{\eta}^2 (1 + 2 \alpha L) \right) \Vert \gr f(\theta_t) \Vert^2
    +
    \alpha (1 + 2 \alpha L) \tilde{\epsilon}^2.
\end{align}
%%%%
Picking the stepsize $\alpha = \frac{1}{8L}$ and using the condition $\tilde{\eta} \leq 1/\sqrt{10}$, we have from the above inequality that
%%%%
\begin{align} 
    f(\theta_{t+1})
    &\leq
    f(\theta_t)
    -
    \frac{1}{32L} \Vert \gr f(\theta_t) \Vert^2 + \frac{5}{32L} \tilde{\epsilon}^2.
\end{align}
%%%%
Averaging the above progression over $t=0,1,\cdots,T-1$ yields that
%%%%
\begin{align} 
    \min_{0 \leq t \leq T-1} \big\Vert \gr f(\theta_t) \big\Vert^2 
    \leq
    \frac{1}{T} \sum_{t=0}^{T-1} \Vert \gr f(\theta_t) \Vert^2
    \leq
    \frac{32L}{T} (f(\theta_0) - f^*)
    +
    \frac{5}{9} \epsilon^2.
\end{align}
%%%%
Therefore, for $T = 72 L \Delta_0 / \epsilon^2$, we conclude that there exists an iteration $t \in \{0,\cdots,T-1\}$ for which
%%%%
\begin{align} 
    \big\Vert \gr f(\theta_t) \big\Vert^2 
    \leq
    \frac{32L}{T} \Delta_0
    +
    \frac{5}{9} \epsilon^2
    =
    \epsilon^2,
\end{align}
%%%%
which yields that the total oracle complexity of {\lrgd} is
%%%%
\begin{align}
\ccalC_{{\normalfont{\lrgd}}}
=
T r + p r
=
\frac{72 r L \Delta_0}{\epsilon^2} + p r.
\end{align}
%%%%
\qed

%%%%%%%%%%%%%%%%%%%%%%%%%%%%%%%%%%%%%%%%%%%%%%%%%%%%%%%%%%%%

\section{Discussion on low-rank Hessians}\label{ap: discussion_low_rank}

As stated in the introduction, it has been noted in the literature that many functions of interest -- such as losses of neural networks -- appear to have approximate low rank Hessians, cf. \cite{gur2018gradient,Sagunetal2018,Papyan2019,Wuetal2021} and the references therein. In this paragraph, we will argue why this observation makes the approximate low-rank assumption realistic at a local scale. In particular, we aim to show the following result,

\begin{proposition}
Given a function $f:\Theta\rightarrow \Rbb$ with bounded third derivatives and some $\bar\theta\in \Theta$ denote by $\sigma_r$ the $r$-th singular value $\nabla^2 f(\bar\theta)$. There exists a linear subspace $H\subset\Theta$ of dimension $r$ that satisfies,
\begin{equation*}
    \|\nabla f(\theta)-\Pi_H(\nabla f(\theta))\|\leq M\|\theta-\thetab\|^2+\sigma_r||\theta-\thetab||.
\end{equation*}
for some constant $M \geq 0$ that only depends on the bound on the third derivatives of $f$.  
\end{proposition}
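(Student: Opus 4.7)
The approach is to apply Taylor's theorem to $\nabla f$ at $\thetab$ and to choose $H$ so that it simultaneously absorbs the constant term and the top $r-1$ principal directions of the Hessian. Writing
\[
\nabla f(\theta) = \nabla f(\thetab) + \nabla^2 f(\thetab)(\theta - \thetab) + R(\theta),
\]
the assumed uniform bound on the third derivatives of $f$ yields $\|R(\theta)\| \leq M \|\theta-\thetab\|^2$ for some constant $M$ depending only on that bound, via the standard integral-remainder estimate.

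Next, let $v_1, \ldots, v_p$ be an orthonormal eigenbasis of $\nabla^2 f(\thetab)$ ordered so that the singular values satisfy $\sigma_1 \geq \sigma_2 \geq \cdots \geq \sigma_p \geq 0$, and I would set
\[
H = \mathrm{span}\bigl(v_1, \ldots, v_{r-1}, \nabla f(\thetab)\bigr),
\]
augmented with an arbitrary additional orthogonal direction if necessary so that $\dim H = r$ in every case. This choice has two key features: first, $\nabla f(\thetab) \in H$, so $\Pi_H(\nabla f(\thetab)) = \nabla f(\thetab)$; and second, since $v_1, \ldots, v_{r-1} \in H$, the complement satisfies $H^\perp \subseteq \mathrm{span}(v_r, \ldots, v_p)$.

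Combining these, $\nabla f(\theta) - \Pi_H(\nabla f(\theta)) = \Pi_{H^\perp}(\nabla f(\theta))$, and plugging in the Taylor expansion while using the first feature to kill the constant term gives
\[
\nabla f(\theta) - \Pi_H(\nabla f(\theta)) = \Pi_{H^\perp}\bigl(\nabla^2 f(\thetab)(\theta-\thetab)\bigr) + \Pi_{H^\perp}(R(\theta)).
\]
The remainder piece is bounded by $M \|\theta-\thetab\|^2$ directly, since $\Pi_{H^\perp}$ is a contraction. For the linear piece, the second feature implies that $\Pi_{H^\perp}\nabla^2 f(\thetab)$ has operator norm at most $\sigma_r$: expanding $\theta-\thetab$ in the eigenbasis, only the coefficients along $v_r, \ldots, v_p$ survive the projection, and each corresponding eigenvalue has magnitude at most $\sigma_r$. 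Adding the two contributions yields the claimed bound.

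Once the subspace is chosen, everything else is essentially routine. The only real subtlety is the decision to give up one eigenvector slot of $H$ in favor of the direction $\nabla f(\thetab)$: with the naive choice $H = \mathrm{span}(v_1, \ldots, v_r)$, the constant term $\nabla f(\thetab)$ would leave a residual that does not vanish as $\theta \to \thetab$. Trading one spectral slot for this direction is precisely what is needed to kill that residual while producing the $\sigma_r$ (rather than $\sigma_{r+1}$) coefficient appearing in the statement.
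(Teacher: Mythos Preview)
Your proof is correct and follows essentially the same approach as the paper: Taylor expand $\nabla f$ at $\thetab$, then take $H$ to be the span of $\nabla f(\thetab)$ together with the top $r-1$ singular directions of $\nabla^2 f(\thetab)$, so that the constant term is absorbed and the linear term contributes at most $\sigma_r\|\theta-\thetab\|$. The only cosmetic differences are that the paper phrases the spectral part via the SVD (with separate left/right singular vectors) rather than an eigenbasis, and combines the pieces by a triangle inequality through the intermediate point $y=\nabla f(\thetab)+\nabla^2 f(\thetab)(\theta-\thetab)$ instead of directly decomposing $\Pi_{H^\perp}(\nabla f(\theta))$ as you do.
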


\begin{proof}
By Taylor's expansion on a function with bounded third derivatives, there exists some $M\geq 0$ such that, 
\begin{align}
    \forall \theta\in \Theta: \|\nabla f(\theta)- \nabla f(\thetab) - \nabla^2f(\thetab) (\theta-\thetab)\| \leq M\|\theta-\thetab\|^2, \label{eq: taylor}.
\end{align}

If we denote by $\sigma_1,\cdots,\sigma_p$ (resp. $u_1,\cdots,u_p$, resp. $v_1,\cdots,v_p$) the singular values (resp. left singular vectors, resp. right singular vectors) obtained from the decomposition of $\nabla^2 f(\thetab)$, we observe that $y = \nabla f(\thetab) + \nabla^2f(\thetab) (\theta-\thetab)$ satisfies,
\begin{equation}
    \|y-\Pi_H(y)\|\leq \sigma_r||\theta-\thetab||, \label{eq : singular}
\end{equation}
where $H = \spn (\nabla f(\thetab),u_1,\dots,u_{r-1})$. Indeed, first note that $\Pi_H(\nabla f(\thetab) ) = \nabla f(\thetab)$ and second observe that for any $x\in \Rbb^p: \|x-\Pi_H(\nabla^2f(\thetab)x)\| = \|\sum_{i\geq r}\sigma_i\langle v_i , x\rangle u_i\| \leq \sigma_r \|x\|$.

Finally, it is a property of the projection operator that $\|\Pi_H(\nabla f(\theta)) - \Pi_H(y)\|\leq \|\nabla f(\theta) - y\|$. As a consequence we can combine inequality both equations \eqref{eq: taylor} and \eqref{eq : singular} by triangular inequality into,
\begin{equation*}
    \|\nabla f(\theta)-\Pi_H(\nabla f(\theta))\|\leq M\|\theta-\thetab\|^2+\sigma_r||\theta-\thetab||.
\end{equation*}
\end{proof}

The expression obtained above is very similar to the definition of  approximate low rankness. In fact, for a given $\eta>0$ the $\eta-$approximately low-rank assumption can hold as long as $M\|\theta-\thetab\|^2+\sigma_r||\theta-\thetab||\leq \eta \|\nabla f(\theta)\|$. Typically this happens when the following conditions are met (1) $\nabla^2 f(\thetab)$ is approximately low-rank, i.e., $\sigma_r$ is small; (2) $\|\nabla f(\theta)\|$ is much greater than $\|\theta-\thetab\|$; (3) The third derivative constant $M$ is very small. 

In practice, the approximately low-rank assumption must hold for multiple iterates of $\lrgd$ of the form $\theta' = \theta+\alpha \hat \nabla f(\theta)$ where $\alpha$ is the step size. It is thus natural to expect $\|\theta-\thetab\|\sim \alpha N\|\nabla f (\theta)\|$ where $N$ is the number of iterations for which the approximate low rank condition holds. This yields the following condition $\alpha MN\|\nabla f(\theta)\|+\sigma_r \leq \eta$, note that this condition may be satisfied for very small step sizes $\alpha$.
%%%%%%%%%%%%%%%%%%%%%%%%%%%%%%%%%%%%%%%%%%%%%%%%%%%%%%%%%%%%

\section{Low-rank objective functions of neural networks}\label{ap: Low-rank objective functions of neural networks}

We next describe an experiment which illustrates that objective functions in neural network training problems can have (approximately) low-rank structure. This experiment is partly inspired by the simulations in \cite[Appendix B]{JadbabaieMakurShah2022}. Specifically, we use the standard MNIST database for multi-class classification \cite{LeCunCortesBurgesMNIST}, which consists of $60,000$ samples of labeled training data and $10,000$ samples of labeled test data. As is standard practice, we normalize and one-hot encode the datasets so that feature vectors (or images) belong to $[0,1]^{784}$ and labels belong to $\{0,1\}^{10}$. Using Keras, we then construct a $2$-layer fully connected neural network with $784$ neurons in the input layer, $784$ neurons in the hidden layer with rectified linear unit (ReLU) activation functions, and $10$ neurons in the output layer with softmax activation function. This neural network has $p = 623,290$ weights. Moreover, we use the cross-entropy loss function.

We train this neural network on the normalized and one-hot encoded training data using Adam \cite{KingmaBa2015} for $20$ epochs with batch size $200$. (Note that our validation accuracy on the test dataset is typically over $98\%$.) Let $\theta^* \in \R^p$ be the optimal network weights obtained from this training. For a fixed standard deviation parameter $\tau = 0.2$, we construct $n = 100$ independent random perturbations $\theta_1,\dots,\theta_n \in \R^p$ of the optimal network weights $\theta^*$ as follows:
\begin{equation}
\forall i \in [n], \enspace \theta_i = \theta^* + \tau Z_i \, ,
\end{equation}
where $Z_1,\dots,Z_n$ are independent and identically distributed Gaussian random vectors with zero mean and identity covariance. Note that $\tau$ is chosen to be the same order of magnitude as ``non-zero'' (i.e., larger) entries of $\theta^*$; $\tau$ provides ``soft'' control over the size of the local neighborhood considered around the optimum $\theta^*$. Then, for any randomly chosen batch of $200$ test data samples, we construct the matrix of gradients $G = [g_1, \dots, g_n] \in \R^{p \times n}$, whose $i$th column is given by
\begin{equation}
g_i = \nabla f(\theta_i)
\end{equation} 
for $i\in [n]$, where $\nabla f : \R^p \rightarrow \R^p$ denotes the gradient of the empirical cross-entropy risk defined by the $200$ test samples with respect to the network weights (i.e., $\nabla f$ is the sum of the gradients corresponding to the $200$ test samples). The gradients in $G$ are computed using standard automatic differentiation procedures in Keras.

Finally, let $\sigma_1(G) \geq \sigma_2(G) \geq \cdots \geq \sigma_n(G) \geq 0$ denote the ordered singular values of the matrix of gradients $G$, where $\|G\|_{\mathrm{F}}^2 = \sum_{i = 1}^{n}{\sigma_i(G)^2}$. Figure \ref{fig: low-rank-mnist} plots the normalized squared singular values $(\sigma_1(G)/\|G\|_{\mathrm{F}})^2 \geq (\sigma_2(G)/\|G\|_{\mathrm{F}})^2 \geq \cdots \geq (\sigma_n(G)/\|G\|_{\mathrm{F}})^2 \geq 0$ against the order indices $[n]$. It shows that $G$ is approximately low-rank with approximate rank around $10$, since the majority of the ``energy'' in $\|G\|_{\mathrm{F}}^2$ is captured by the $10$ largest singular values. So, despite there being $100$ different gradients in $G$ in a neighborhood of $\theta^*$, they are all approximately captured by a span of $10$ vectors in $\R^p$. Therefore, this simulation illustrates that the empirical cross-entropy risk (or objective function) corresponding to the MNIST neural network training task is approximately low-rank in a neighborhood of $\theta^*$. 

We remark that small perturbations of $\tau$,
increasing $n$, increasing the batch size of $200$ (which defines the empirical cross-entropy risk), and changing the hyper-parameters of training (e.g., number of neurons in hidden layer, choice of optimization algorithm, activation functions, etc.) do not seem to affect the qualitative nature of the plot in Figure \ref{fig: low-rank-mnist}. We refer readers to \cite{gur2018gradient,Sagunetal2018,Papyan2019,Wuetal2021} for more thorough empirical studies of such low-rank structure. (As discussed in Appendix \ref{ap: discussion_low_rank}, the low-rank Hessian structure analyzed in these works  \cite{gur2018gradient,Sagunetal2018,Papyan2019,Wuetal2021} yields low-rank gradient structure.) 

\begin{figure}[t]
\centering
\includegraphics[width=0.8\linewidth]{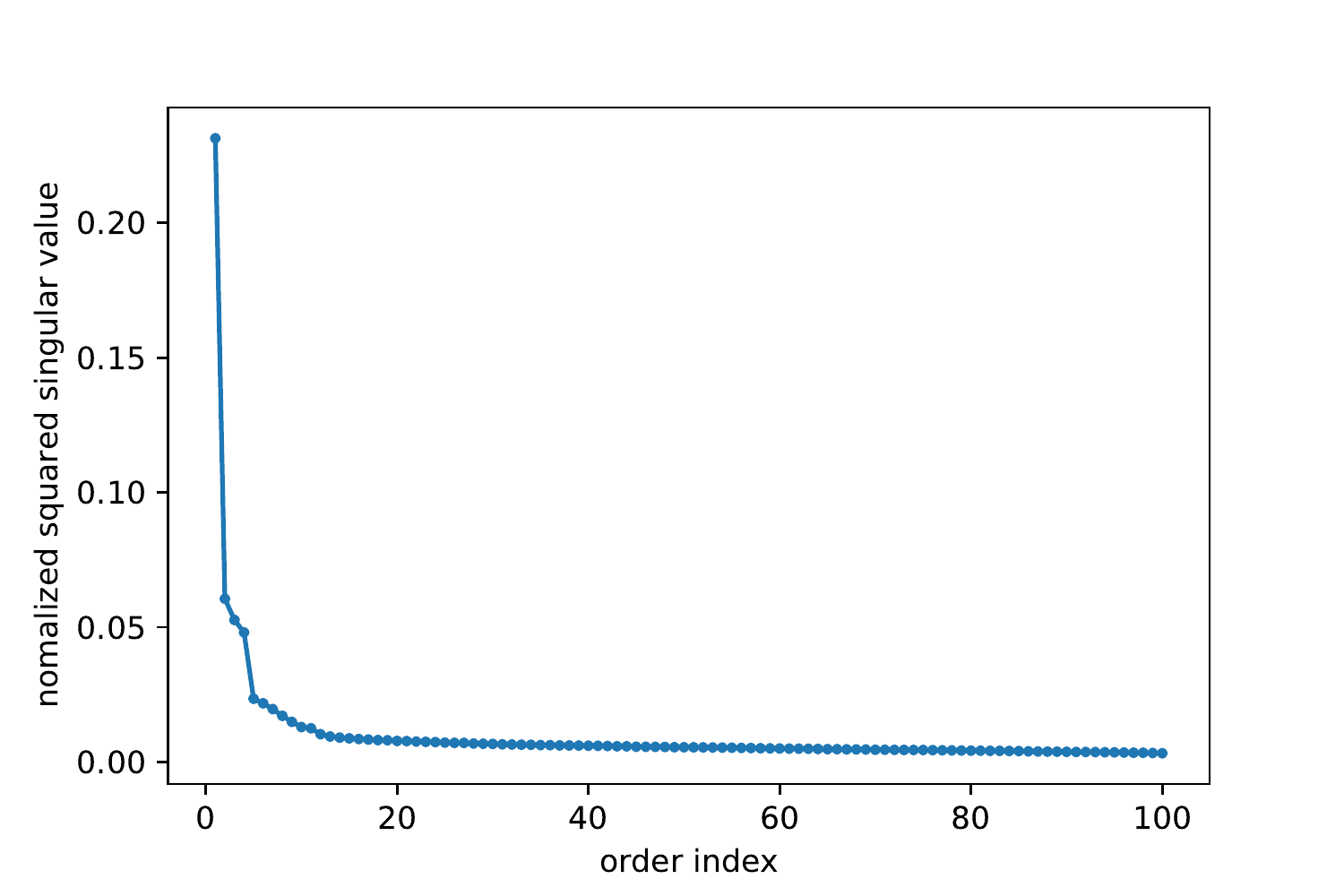}
\caption{Plot of squared singular values of the matrix of gradients $G$ normalized by $\|G\|_{\mathrm{F}}^2$. Note that the $i$th largest normalized squared singular value $(\sigma_i(G)/\|G\|_{\mathrm{F}})^2$ is displayed at order index $i \in [100]$.}  
\label{fig: low-rank-mnist}
\end{figure}

%%%%%%%%%%%%%%%%%%%%%%%%%%%%%%%%%%%%%%%%%%%%%%%%%%%%%%%%%%%%

\section{Additional experimental tables}\label{ap: additional_experiment}

\begin{table}[h!]
  \centering
  \begin{tabular}{ccccc}
    \toprule
    \multicolumn{5}{c}{$H$}                   \\
    \cmidrule(r){2-5}
    $\theta_0$     
    & 
    $\begin{pmatrix}
    1 & 0\\
    0 & 1
    \end{pmatrix}$
    & 
    $\begin{pmatrix}
    10 & 0\\
    0 & 1
    \end{pmatrix}$
    &
    $\begin{pmatrix}
    100 & 0\\
    0 & 1
    \end{pmatrix}$
    &
    $\begin{pmatrix}
    1000 & 0\\
    0 & 1
    \end{pmatrix}$
    \\
    \midrule
    $\begin{pmatrix} 1.00 & 0.00 \end{pmatrix}$ & 6 (7)  & 12 (10) & 20 (14) & 26 (17)  \\
    $\begin{pmatrix} 0.87 & 0.50 \end{pmatrix}$ & 6 (7) & 46 (35) & 460 (293) & 4606 (3025)  \\
    $\begin{pmatrix} 0.71 & 0.71 \end{pmatrix}$ & 6 (7) & 60 (40) & 600 (385) & 6008 (3880)  \\
    $\begin{pmatrix} 0.50 & 0.87 \end{pmatrix}$ & 6 (7) & 68 (51) & 682 (443) & 6820 (4120)  \\
    $\begin{pmatrix} 0.00 & 1.00 \end{pmatrix}$ & 6 (7) & 72 (40) & 736 (372) & 7376 (3692)  \\
    \bottomrule
  \end{tabular}
  \vspace{2mm}
  \caption{Same as Table \ref{tab: table half} with $\alpha$ set to $1/(4L).$}
  \label{tab: table fourth}
\end{table}

\begin{table}[h!]
  \centering
  \begin{tabular}{ccccc}
    \toprule
    \multicolumn{5}{c}{$H$}                   \\
    \cmidrule(r){2-5}
    $\theta_0$     
    & 
    $\begin{pmatrix}
    1 & 0\\
    0 & 1
    \end{pmatrix}$
    & 
    $\begin{pmatrix}
    10 & 0\\
    0 & 1
    \end{pmatrix}$
    &
    $\begin{pmatrix}
    100 & 0\\
    0 & 1
    \end{pmatrix}$
    &
    $\begin{pmatrix}
    1000 & 0\\
    0 & 1
    \end{pmatrix}$
    \\
    \midrule
    $\begin{pmatrix} 1.00 & 0.00 \end{pmatrix}$ & 18 (13)  & 38 (23) & 58 (33) & 80 (44)  \\
    $\begin{pmatrix} 0.87 & 0.50 \end{pmatrix}$ & 18 (13) & 114 (89) & 1152 (685) & 11512 (7258)  \\
    $\begin{pmatrix} 0.71 & 0.71 \end{pmatrix}$ & 18 (13) & 150 (102) & 1502 (898) & 15018 (10331)  \\
    $\begin{pmatrix} 0.50 & 0.87 \end{pmatrix}$ & 18 (13) & 170 (108) & 1704 (1079) & 17052 (9520)  \\
    $\begin{pmatrix} 0.00 & 1.00 \end{pmatrix}$ & 18 (13) & 184 (96) & 1844 (926) & 18444 (9226)  \\
    \bottomrule
  \end{tabular}
  \vspace{2mm}
  \caption{Same as Table \ref{tab: table half} with $\alpha$ set to 
  $1/(10L)$.
  }
  \label{tab: table tenth}
\end{table}

%%%%%%%%%%%%%%%%%%%%%%%%%%%%%%%%%%%%%%%%%%%%%%%%%%%%%%%%%%%%

\newpage 

% References section
\bibliographystyle{unsrt}
\bibliography{biblio}

\end{document}